\newtheorem{theorem}{Theorem}
\newtheorem{lemma}{Lemma}
\newtheorem{proposition}{Proposition}
\newcommand{\xmark}{\text{\ding{55}}}
\def \ind {\mathds{1}}
\def \cD {\mathcal{D}}
\def \cA {\mathcal{A}}
\def \cR {\mathcal{R}}
\def \cL {\mathcal{L}}
\def \cX {\mathcal{X}}
\def \cT {\mathcal{T}}
\def \cS {\mathcal{S}}
\def \cP {\mathcal{P}}
\def \cN {\mathcal{N}}
\def \bbE {\mathbb{E}}
\def \bbA {\mathbb{A}}
\def \bbB {\mathbb{B}}
\def \bbC {\mathbb{C}}
\def \bbD {\mathbb{D}}
\def \bbG {\mathbb{G}}
\def \bbH {\mathbb{H}}
\def \bbI {\mathbb{I}}
\def \bbJ {\mathbb{J}}
\def \vx {\bm{x}}
\def \vp {\bm{p}}
\def \vw {\bm{w}}
\def \vdelta {\bm{\delta}}
\def \vxi {\bm{\xi}}
\def \vmu {\bm{\mu}}
\def \vI {\bm{I}}
\def \hyp {\text{hyp}}
\def \nat {\text{nat}}
\def \adv {\text{adv}}
\def \rob {\text{rob}}
\def \linf {\ell_{\infty}}
\def \sign {\operatorname{sign}}
\def \Pr {\operatorname{Pr}}
\newcommand{\magic}{\vspace{-0.8em}}
\title{Can Adversarial Training Be Manipulated By Non-Robust Features?}
\author{
Lue Tao\textsuperscript{\rm 1}\footnotemark[1]\quad\quad 
Lei Feng\textsuperscript{\rm 2,3}\quad\quad 
Hongxin Wei\textsuperscript{\rm 4}\\
\textbf{Jinfeng Yi}\textsuperscript{\rm 5}\quad\quad 
\textbf{Sheng-Jun Huang}\textsuperscript{\rm 6}\quad\quad 
\textbf{Songcan Chen}\textsuperscript{\rm 6}\footnotemark[2]\\
\normalsize \textsuperscript{\rm 1}National Key Laboratory for Novel Software Technology, Nanjing University, Nanjing, China\\
\normalsize \textsuperscript{\rm 2}Chongqing University, Chongqing, China\\
\normalsize \textsuperscript{\rm 3}RIKEN Center for Advanced Intelligence Project, Japan\\
\normalsize \textsuperscript{\rm 4}Nanyang Technological University, Singapore\\
\normalsize \textsuperscript{\rm 5}JD AI Research, Beijing, China\\
\normalsize \textsuperscript{\rm 6}MIIT Key Laboratory of Pattern Analysis and Machine Intelligence, \\
\normalsize Nanjing University of Aeronautics and Astronautics, Nanjing, China\\
}
\begin{document}

\maketitle

\renewcommand{\thefootnote}{\fnsymbol{footnote}}
\footnotetext[1]{Work done when Lue Tao was a master's student at Nanjing University of Aeronautics and Astronautics.}
\footnotetext[2]{Corresponding author: Songcan Chen <s.chen@nuaa.edu.cn>.}
\renewcommand{\thefootnote}{\arabic{footnote}}

\begin{abstract}
  Adversarial training, originally designed to resist test-time adversarial examples, has shown to be promising in mitigating \textit{training-time availability attacks}. This defense ability, however, is challenged in this paper. We identify a novel threat model named \textit{stability attack}, which aims to hinder \textit{robust} availability by slightly manipulating the training data. Under this threat, we show that adversarial training using a conventional defense budget $\epsilon$ provably fails to provide test robustness in a simple statistical setting, where the non-robust features of the training data can be reinforced by $\epsilon$-bounded perturbation. Further, we analyze the necessity of enlarging the defense budget to counter stability attacks. Finally, comprehensive experiments demonstrate that stability attacks are harmful on benchmark datasets, and thus the adaptive defense is necessary to maintain robustness.\footnote{Our code is available at \url{https://github.com/TLMichael/Hypocritical-Perturbation}.}
\end{abstract}

\section{Introduction}

Robustness to input perturbations is crucial to machine learning deployment in various applications, such as spam filtering~\cite{dalvi2004adversarial} and autonomous driving~\cite{bojarski2016end}. One of the most popular methods for improving test robustness is \textit{adversarial training}~\cite{madry2018towards, athalye2018obfuscated}. By augmenting the training data with $\epsilon$-bounded and on-the-fly crafted adversarial examples, adversarial training helps the learned model resist test-time perturbations~\cite{madry2018towards}.


\begin{figure}[t]
\begin{center}
\includegraphics[width=0.5\columnwidth]{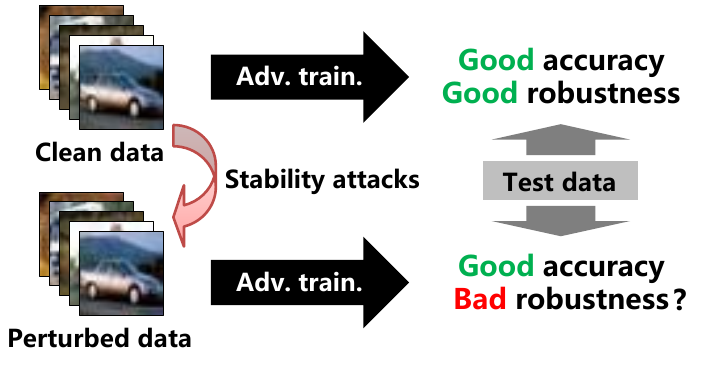}
\caption{
An illustration of stability attacks, where the training data is slightly perturbed to hinder adversarial training.
}
\label{fig:diagram-problem}
\end{center}
\end{figure}

On the other hand, machine learning systems are vulnerable to \textit{training-time availability attacks}~\cite{biggio2018wild}. In particular, small perturbations applied into the training data (before training) suffice to degrade the overall test performance of naturally trained models~\cite{feng2019learning, huang2021unlearnable}. Fortunately, recent work has proven that adversarial training~\cite{madry2018towards} is capable of mitigating this type of threat~\cite{tao2021provable}. In other words, even if the training data is manipulated to maximize the test error, 
considerable accuracy on clean test data can still be achieved by adversarially trained models.
However, previous work hardly inspects the test robustness of the models, which is what adversarial training was originally proposed for~\cite{goodfellow2014explaining, madry2018towards}. 
This naturally raises the following question:

\begin{center}
    \em Are the models adversarially trained on the manipulated data robust to test-time perturbations?\\
    \ \\
\end{center}

In this work, we show that conventional adversarial training may fail to provide test robustness when the training data is manipulated by an adversary, and thus an adaptive defense is necessary to resolve this issue.
Our contributions are summarized as follows:
\begin{enumerate}
    \item We introduce a novel threat model called \textit{stability attack}, where an adversary aims to degrade the overall test robustness of adversarially trained models by slightly perturbing the training~data. \cref{fig:diagram-problem} illustrates the threat of stability attacks.
    \item We show that adversarial training using a conventional defense budget \textit{provably} fails under stability attacks in a simple statistical setting. Specifically, a defense budget of $\epsilon$ will produce models that are \textit{not} robust to $\epsilon$-bounded adversarial examples when the training data is hypocritically perturbed.
    \item We unveil that the aforementioned vulnerability stems from the existence of the non-robust (predictive, yet brittle) features~\cite{ilyas2019adversarial} in the original training data. When the non-robust features are reinforced by hypocritical perturbations, the conventional defense budget will be insufficient to offset the negative impact.
    \item We further show that a defense budget of $2\epsilon$ is capable of resisting any stability attack for adversarial training, while the budget can be reduced to $\epsilon+\eta$ in a simple statistical setting, where $\eta$ is the magnitude of the non-robust features.
    \item We demonstrate that stability attacks are harmful to conventional adversarial training on benchmark datasets. In addition, our empirical study suggests that enlarging the defense budget is essential for mitigating hypocritical perturbations.
\end{enumerate} 
To the best of our knowledge, this is the first work that studies the robustness of adversarial training against stability attacks. Both theoretical and empirical evidences show that the conventional defense budget $\epsilon$ is insufficient under the threat of $\epsilon$-bounded training-time perturbations. Our findings suggest that practitioners should consider a larger defense budget of no more than $2\epsilon$ (practically, about $1.5\epsilon \sim 1.75\epsilon$) to achieve a better $\epsilon$-robustness.

\section{Threat Models}

In this section, we formally introduce the threat model of~stability attacks. We begin by revisiting the concepts of natural risk, adversarial risk, and delusive attacks. These concepts naturally give rise to our formulation of stability attacks.

\subsection{Preliminaries}

\paragraph{Setup.}
We consider a classification task with input-label pairs $(\vx, y)$ from an underlying distribution $\cD$ over $\cX \times [k]$. The goal is to learn a (robust) classifier $f: \cX \rightarrow [k]$ that predicts a label $y$ for a given input $\vx$. 

\paragraph{Natural training (NT).} 
Most learning algorithms aim to maximize the generalization performance on unperturbed examples, i.e., natural accuracy. The goal is to minimize the natural risk defined as:
\begin{equation}
\label{eq:nat_risk}
    \cR_{\nat}(f) \coloneqq \underset{(\vx, y) \sim \cD}{\bbE} \left[\cL(f(\vx), y)\right].
\end{equation}

\paragraph{Adversarial training (AT).}
Since the risk of adversarial examples (a.k.a.~evasion attacks) was found to be unexpectedly high~\citep{biggio2013evasion, szegedy2013intriguing}, it has become increasingly important to defend the learner against the worst-case perturbations~\citep{goodfellow2014explaining, madry2018towards}. In this context, the goal is to train a model that has low \textit{adversarial risk} given a defense budget $\epsilon$:
\begin{equation}
\label{eq:adv_risk}
    \cR_{\adv}(f) \coloneqq \underset{(\vx, y) \sim \cD}{\bbE} \left[\max_{\vdelta \in \Delta} \cL(f(\vx+\vdelta), y)\right],
\end{equation}
where we choose $\Delta$ to be the set of $\epsilon$-bounded perturbations, i.e., $\Delta = \{\vdelta \mid \|\vdelta\| \le \epsilon\}$. This choice is the most common one in the context of adversarial examples~\citep{tsipras2018robustness}. To simplify the notation, we refer to the robustness with respect to this set as $\epsilon$-robustness. It is worth noting that $\cR_{\adv}(f) \ge \cR_{\nat}(f)$ always holds for any $f$, and the equation holds when $\epsilon=0$.

\begin{table*}[t]
  \centering
  \caption{
  Comparisons between evasion attacks, delusive attacks, and stability attacks. 
  }
  \label{tab:threat-comparison}
  \vspace{0.5ex}
  \begin{center}
  \begin{small}
  \begin{tabular}{@{}l|cc|cc@{}}
  \toprule
  Threat model &
    \begin{tabular}[c]{@{}c@{}}Training-time \\ perturbation\end{tabular} &
    \begin{tabular}[c]{@{}c@{}}Test-time \\ perturbation\end{tabular} &
    Learning scheme &
    Test performance \\ \midrule
  None                      & $\xmark$                      & $\xmark$                      & NT                & Good \\ \midrule
  \multirow{2}{*}{Evasion attacks~\cite{biggio2013evasion, szegedy2013intriguing, goodfellow2014explaining, madry2018towards}}  & \multirow{2}{*}{$\xmark$}     & \multirow{2}{*}{$\checkmark$} & NT                & Bad  \\
                                    &                               &                               & AT            & Good \\ \midrule
  \multirow{2}{*}{Delusive attacks~\cite{newsome2006paragraph, feng2019learning, huang2021unlearnable, tao2021provable}} & \multirow{2}{*}{$\checkmark$} & \multirow{2}{*}{$\xmark$}     & NT                & Bad  \\
                                    &                               &                               & AT            & Good \\ \midrule
  \multirow{2}{*}{Stability attacks (this paper)} &
    \multirow{2}{*}{$\checkmark$} &
    \multirow{2}{*}{$\checkmark$} &
    AT (conventional) &
    Bad \\
                                    &                               &                               & AT (our improved) & Good \\ \bottomrule
  \end{tabular}
  \end{small}
  \end{center}
  \magic
  \end{table*}

\paragraph{Delusive attacks.}
Delusive attacks, which belong to training-time availability attacks, aim to prevent the learner from producing an accurate model by manipulating the training data ``imperceptibly''~\citep{newsome2006paragraph}. 
Concretely, the features of the training data can be perturbed, while the labels should remain correct~\cite{feng2019learning, nakkiran2019a, shen2019tensorclog, huang2021unlearnable, pmlr-v139-yuan21b, evtimov2021disrupting, tao2021provable, fowl2021adversarial, fu2022robust}. This malicious task can be formalized into the following bi-level optimization problem:
\begin{equation}
\label{eq:delusive_attacks}
\begin{gathered}
    \max_{\cP \in \cS} \underset{(\vx, y) \sim \cD}{\bbE} \left[\cL(f_{\cP}(\vx), y)\right] \\
    \text{s.t. } f_{\cP} \in \underset{f}{\arg\min} \underset{(\vx_i, y_i) \in \cT}{\textstyle \sum} \left[\cL(f(\vx_i + \vp_i), y_i)\right].
\end{gathered}
\end{equation}
Here, the adversary aims to maximize the natural risk of the model $f_{\cP}$ (that is trained on the manipulated training set) by applying the generated perturbations $\cP=\{\vp_i\}_{i=1}^n$ into the original training set $\cT = \{(\vx_i, y_i)\}_{i=1}^n$. The commonly used feasible region is $\cS=\{\{\vp_i\}_{i=1}^n \mid \|\vp_i\| \le \epsilon\}$. 

Generally, solving~\cref{eq:delusive_attacks} is computationally prohibitive for neural networks~\cite{tao2021provable, fowl2021adversarial}. Thus, various heuristic methods are proposed to achieve the goal. Among them, a representative method is the \textit{hypocritical perturbation}~\cite{tao2020false, tao2021provable}, crafted as follows:
\begin{equation}
\label{eq:hypocritical_perturbation}
  \min_{\|\vp_i\| \le \epsilon} \cL(f_{\text{craft}}(\vx_i + \vp_i), y_i),
\end{equation}
where $f_{\text{craft}}$ is called the \textit{crafting model}, pre-trained before generating poisons. \citet{tao2021provable} simply adopted a naturally trained classifier as the crafting model, while~\citet{huang2021unlearnable} proposed a min-min bi-level optimization process to pre-train the crafting model. \citet{fu2022robust} further built their crafting model via a min-min-max three-level optimization process, and generated their poisons by replacing \cref{eq:hypocritical_perturbation} with a min-max bi-level objective.

Another representative method of delusive attacks is the \textit{adversarial perturbation}, crafted by solving
\begin{equation}
\label{eq:adversarial_perturbation}
\max_{\|\vp_i\| \le \epsilon} \cL(f_{\text{craft}}(\vx_i + \vp_i), y_i).
\end{equation}
\citet{tao2021provable} and \citet{fowl2021adversarial} both found that applying the adversarial perturbation to the training data is very effective at compromising naturally trained models. However, adversarial training has proven to be promising in defending against various delusive attacks~\cite{tao2021provable}.

\subsection{Stability Attacks}
\label{sec:stability-attacks}

In contrast to delusive attacks that aim at increasing the natural risk, stability attacks attempt to maximize the adversarial risk of the learner by slightly perturbing the training data:
\begin{equation}
\label{eq:stability_attacks}
\begin{gathered}
    \max_{\cP \in \cS} \underset{(\vx, y) \sim \cD}{\bbE} \left[\max_{\vdelta \in \Delta}\cL(f_{\cP}(\vx+\vdelta), y)\right],
\end{gathered}
\end{equation}
where $f_{\cP}$ denotes the victim model, which is naturally or adversarially trained on the perturbed data.
In other words, stability attacks seek to hinder the \textit{robust} availability of the training data.
\cref*{tab:threat-comparison} shows the comparisons among different threat models.

The goal of stability attacks can be immediately achieved for naturally trained models, since they have already incurred high adversarial risk, even if the training data is clean~\cite{szegedy2013intriguing}. To ease the problem of high adversarial risk, adversarial training has been widely used to improve model's adversarial robustness~\citep{gowal2020uncovering, croce2021robustbench}. 
Hence, the main goal of stability attacks becomes to compromise the test robustness of adversarially trained models.

Note that \cref{eq:stability_attacks} is a multi-level optimization problem that is not easy to solve, our next question is how to conduct effective stability attacks against adversarial training.
In the following sections, we introduce an effective stability attack method and analyze the cost of resisting it.

\paragraph{Remark 1.}
This work focuses on adding bounded pertubrations as small as possible. We mostly assume that the adversary has full control of training data (instead of changing a few) by following previous works~\cite{feng2019learning, huang2021unlearnable, tao2021provable, fowl2021preventing, fowl2021adversarial, fu2022robust}. This is a realistic assumption~\cite{feng2019learning, fowl2021adversarial}. For instance, in some applications an organization may agree to release some internal data for peer assessment, while preventing competitors from easily building a model with high test robustness; this can be achieved by perturbing the entire dataset via stability attacks before releasing. Moreover, this assumption enables a worst-case analysis of the robustness of adversarial training, which may facilitate important theoretical implications.

\section{How to Manipulate Adversarial Training}
\label{sec:how-to-manipulate}

Previous work suggests that adversarial training could defend against both evasion attacks and delusive attacks~\cite{madry2018towards, tao2021provable}. However, in this paper, we show that adversarial training using a conventional defense budget $\epsilon$ may not be sufficient to provide $\epsilon$-robustness when confronted with stability attacks. In particular, we present a simple theoretical model where the conventional defense scheme provably fails when the training data is hypocritically perturbed.

\paragraph{The binary classification task.} 
The data model is largely based on the setting proposed by~\citet{tsipras2018robustness}, which draws a distinction between \textit{robust features} and \textit{non-robust features}.
Specifically, it consists of input-label pairs $(\vx, y)$ sampled from a Gaussian mixture distribution $\cD$ as follows:
\begin{equation}
\label{eq:mixGau}
\begin{aligned}
    & y \stackrel{u.a.r}{\sim}\{-1,+1\}, \quad x_1 \sim \cN(y, \sigma^2), \quad x_2, \ldots, x_{d+1} \stackrel{i.i.d}{\sim} \cN(\eta y, \sigma^2),
\end{aligned}
\end{equation}
where $\eta$ is much smaller than $1$ (i.e., $0 < \eta \ll 1$). Hence, samples from $\cD$ consist of a robust feature ($x_1$) that is \textit{strongly} correlated with the label, and $d$ non-robust features ($x_2, \ldots, x_{d+1}$) that are \textit{very weakly} correlated with it. Typically, an adversary can manipulate a large number of non-robust features - e.g. $d = \Theta(1/\eta^2)$ will suffice.

Before introducing the way to hinder robust availability, we briefly illustrate the success of adversarial training when the training data is unperturbed.

\paragraph{Natural and robust classifiers.}
For standard classification, we consider a natural classifier:
\begin{equation}
\label{eq:std_classifier}
    f_{\nat}(\vx) \coloneqq \sign(\vw_{\nat}^{\top} \vx), \,\, \text{where } \vw_{\nat}\coloneqq[1, \eta, \ldots, \eta],
\end{equation}
which is a minimizer of the natural risk~(\ref{eq:nat_risk}) with 0-1 loss on the data~(\ref{eq:mixGau}), i.e., the Bayes optimal classifier. However, in the adversarial setting, this natural classifier is quite brittle.
Thus, it is imperative to obtain a robust classifier:
\begin{equation}
\label{eq:rob_classifier}
    f_{\rob}(\vx) \coloneqq \sign(\vw_{\rob}^{\top} \vx), \,\, \text{where } \vw_{\rob}\coloneqq[1, 0, \ldots, 0],
\end{equation}
which relies only on the robust feature $x_1$.

\paragraph{Illustration of adversarial accuracy.}
In the adversarial setting, an adversary that is only allowed to perturb each feature by a moderate $\epsilon$ can effectively subvert the natural classifier~\citet{tsipras2018robustness}. In particular, if $\epsilon=2\eta$, an adversary can essentially force each non-robust feature to be \textit{anti}-correlated with the correct label. The following proposition, proved in \cref{app:proof-thm-adv_accuracy}, gives the adversarial accuracies of the natural classifier $f_{\nat}$~(\ref{eq:std_classifier}) and the robust classifier $f_{\rob}$~(\ref{eq:rob_classifier}).

\begin{proposition}[]
\label{thm:adv_accuracy}
Let $\epsilon=2\eta$ and denote by $\cA_{\textup{adv}}(f)$ the adversarial accuracy, i.e., the probability of a classifier correctly predicting $y$ on the data~(\ref{eq:mixGau}) under $\linf$ perturbations. Then, we have
\begin{equation*}
\begin{aligned}
    \cA_{\textup{adv}}(f_{\textup{nat}}) \le \Pr \left\{ \cN(0, 1) < \frac{1-d\eta^2}{\sigma \sqrt{1+d\eta^2}} \right\}, \quad \cA_{\textup{adv}}(f_{\textup{rob}}) = \Pr \left\{ \cN(0, 1) < \frac{1-2\eta}{\sigma} \right\}.
\end{aligned}
\end{equation*}
\end{proposition}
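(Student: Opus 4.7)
The plan is to treat both classifiers as linear functions $f(\vx) = \sign(\vw^\top \vx)$, for which an $\linf$-bounded perturbation $\vdelta$ preserves correctness iff $y \cdot \vw^\top(\vx + \vdelta) > 0$. Since $\vw^\top \vx$ is a linear combination of independent Gaussians, each bound will reduce to the tail probability of a one-dimensional Gaussian. Concretely, my procedure is (i) pick an admissible $\vdelta$ that is at least as adversarial as the null one, (ii) compute the mean and variance of $y \cdot \vw^\top(\vx + \vdelta)$, and (iii) standardize to a $\cN(0,1)$ tail.

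For $f_{\rob}$ the classifier depends only on $x_1$, and the worst $\linf$ perturbation of size $2\eta$ is the one that subtracts exactly $2\eta y$ from that coordinate. Since $y \cdot x_1 \sim \cN(1, \sigma^2)$, the classifier is correct iff $y \cdot x_1 > 2\eta$, and standardizing gives $\cA_{\textup{adv}}(f_{\rob}) = \Pr\{\cN(0,1) < (1-2\eta)/\sigma\}$, which is the claimed exact equality.

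For $f_{\nat}$ I will exhibit a specific admissible adversary that already matches the stated bound; because this adversary is feasible but possibly suboptimal, the worst-case accuracy can only be smaller, which yields the inequality. Choose $\delta_1 = 0$ and $\delta_i = -2\eta y$ for $i \ge 2$, which lies in the $\linf$ ball of radius $2\eta$. Then $\vw_{\nat}^\top(\vx + \vdelta) = x_1 + \eta \sum_{i=2}^{d+1} x_i - 2 d\eta^2 y$, so the classifier is correct iff $y \cdot \vw_{\nat}^\top \vx > 2 d \eta^2$. The signed margin $y \cdot \vw_{\nat}^\top \vx$ is a sum of $d+1$ independent Gaussians with total mean $1 + d\eta^2$ and total variance $\sigma^2(1 + d\eta^2)$ (since $y \cdot x_1 \sim \cN(1,\sigma^2)$ and $y \cdot x_i \sim \cN(\eta,\sigma^2)$ are weighted by $1$ and $\eta$ respectively); standardizing produces $\Pr\{\cN(0,1) < (1 - d\eta^2)/(\sigma\sqrt{1 + d\eta^2})\}$, as required.

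The only real subtlety, and thus the main obstacle, is choosing this suboptimal adversary instead of the truly worst-case one: the strict worst case would also push $x_1$ by $-2\eta y$ and contribute an additional $-2\eta$ to the numerator, which is absent from the stated bound. Recognising that the proposition only asserts an upper bound on accuracy makes leaving $x_1$ untouched the natural simplification, and turns the whole argument into a short Gaussian-tail computation.
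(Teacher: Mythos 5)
Your proof is correct and follows essentially the same route as the paper's: for $f_{\rob}$ you solve the worst case exactly on the single coordinate, and for $f_{\nat}$ you lower-bound the adversary by the specific feasible perturbation $\delta_1=0$, $\delta_i=-2\eta y$ ($i\ge 2$), then standardize the resulting Gaussian margin — exactly the choice made in the paper's derivation. Your closing remark correctly identifies why only an inequality is claimed for $f_{\nat}$ (the true worst case would also shift $x_1$), which the paper leaves implicit.
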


Proposition~\ref{thm:adv_accuracy} implies that the adversarial accuracy of the natural classifier is $<50\%$ when $d \ge 1/\eta^2$. Even worse, when $\sigma \le 1/3$ and $d \ge 3/\eta^2$, the adversarial accuracy of the natural classifier~(\ref{eq:std_classifier}) is always lower than $1\%$. In contrast, the robust classifier~(\ref{eq:rob_classifier}) yields a much higher adversarial accuracy (always $>50\%$); when $\sigma \le (1-2\eta)/3$, its adversarial accuracy will be higher than $99\%$.

\subsection{Hypocritical Features Are Harmful}
\label{sec:hyp-harmful}

The results above reveal the advantages of robust classifiers over natural classifiers. Note that such a robust classifier can be obtained by adversarial training on the original data~(\ref{eq:mixGau}). However, this defense effect may not hold when the adversary is allowed to perturb the training data. 

We show this by analyzing two representative perturbations: the \textit{adversarial perturbation}~\cite{tao2021provable, fowl2021adversarial} and the \textit{hypocritical perturbation}~\cite{tao2020false, tao2021provable}. When applied into the training data, both perturbations are effective as delusive attacks for naturally trained models. In the following, we show that the former is harmless: adversarial training using a defense budget $\epsilon$ on the adversarially perturbed data can still provide test robustness. In contrast, the latter is harmful: we find that the same defense budget can only produce non-robust classifiers when the training data is hypocritically perturbed.

\paragraph{A harmless case.}
Consider an adversary who is capable of perturbing the training data by an attack budget $\epsilon$. The adversary may choose to shift each feature towards $-y$. Hence, the learner would see input-label pairs $(\vx, y)$ sampled i.i.d.~from a training distribution $\cT_{\adv}$ as follows:
\begin{equation}
\label{eq:mixGau_adv}
\begin{aligned}
    & y \stackrel{u.a.r}{\sim}\{-1,+1\}, \quad x_1 \sim \cN((1-\epsilon)y, \sigma^2), \quad x_2, \ldots, x_{d+1} \stackrel{i.i.d}{\sim} \cN((\eta-\epsilon) y, \sigma^2),
\end{aligned}
\end{equation}
where each feature of the samples from $\cT_{\adv}$ is adversarially perturbed by a moderate $\epsilon$. While these samples are deviate significantly from the original distribution $\cD$~(\ref{eq:mixGau}), adversarial training on them using a defense budget $\epsilon$ is still able to neutralize the non-robust features. Formally, in \cref{app:proof-thm-adv-harmless} we prove the following theorem.

\begin{theorem}[Adversarial perturbation is harmless]
\label{thm:adv-harmless}
Assume that the adversarial perturbation in the training data $\cT_{\adv}$~(\ref{eq:mixGau_adv}) is moderate such that $\eta/2 \le \epsilon < 1/2$. Then, the optimal linear $\linf$-robust classifier obtained by minimizing the adversarial risk on $\cT_{\adv}$ with a defense budget $\epsilon$ is equivalent to the robust classifier~(\ref{eq:rob_classifier}).
\end{theorem}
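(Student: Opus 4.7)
The plan is to reduce the population adversarial-risk minimization over linear classifiers to maximizing a one-dimensional scale-invariant ratio. For any linear $\sign(\vw^\top \vx)$, the worst-case $\linf$-perturbation of size $\epsilon$ shifts the signed margin by $-\epsilon \|\vw\|_1$, so under $\cT_{\adv}$ the adversarial margin $y\vw^\top\vx - \epsilon\|\vw\|_1$ is Gaussian with mean
\begin{equation*}
\mu(\vw) \;=\; (1-\epsilon)\, w_1 \;+\; (\eta-\epsilon)\textstyle\sum_{i=2}^{d+1} w_i \;-\; \epsilon\|\vw\|_1
\end{equation*}
and standard deviation $\sigma\|\vw\|_2$. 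Since both $\mu$ and $\|\vw\|_2$ are positively $1$-homogeneous, minimizing the adversarial $0$-$1$ risk (which equals $\Phi(-\mu(\vw)/(\sigma\|\vw\|_2))$) reduces to maximizing $g(\vw) \coloneqq \mu(\vw)/\|\vw\|_2$ over $\vw \neq \vzero$, or equivalently maximizing $\mu(\vw)$ subject to $\|\vw\|_2=1$.

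Next I argue coordinate by coordinate. By the $\vw \mapsto -\vw$ symmetry, and because $g(\vw_{\rob}) = (1-2\epsilon)/\sigma > 0$ under $\epsilon < 1/2$, it suffices to consider $w_1 \ge 0$. Expanding $\|\vw\|_1 = w_1 + \sum_{i\ge 2}|w_i|$ rewrites the numerator as
\begin{equation*}
\mu(\vw) \;=\; (1-2\epsilon)\, w_1 \;+\; \sum_{i \ge 2}\bigl[(\eta-\epsilon)\, w_i - \epsilon\,|w_i|\bigr].
\end{equation*}
The key observation is that, under the hypothesis $\eta/2 \le \epsilon$, each summand in the second term is non-positive: if $w_i \ge 0$ it equals $(\eta - 2\epsilon)\,w_i \le 0$, while if $w_i < 0$ it equals $\eta\, w_i \le 0$. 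Thus any $w_i \neq 0$ for $i \ge 2$ can only decrease $\mu(\vw)$, and simultaneously (under $\|\vw\|_2=1$) forces $w_1 < 1$, which further shrinks the already-positive contribution $(1-2\epsilon) w_1$. Combined with $1-2\epsilon > 0$, this shows the unique maximizer of $g$ on the $\ell_2$-sphere is $\vw = (1, 0, \ldots, 0) = \vw_{\rob}$, so the optimal classifier is $\sign(x_1) = f_{\rob}(\vx)$.

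The main bookkeeping concern is tying \emph{minimizing the adversarial risk} to the $0$-$1$ calculation above. This is handled by noting that the adversarial margin's distribution is a Gaussian parameterized entirely by $(\mu(\vw), \sigma\|\vw\|_2)$, so for any monotone margin-based surrogate loss the population adversarial risk is a monotone function of the ratio $g(\vw)$; the direction $\vw_{\rob}$ therefore remains optimal. I would also remark that the interval $\eta/2 \le \epsilon < 1/2$ is natural in this argument: the lower endpoint is exactly the threshold at which the $\epsilon$-budget defender erases any residual signal in the non-robust features (forcing $\eta - 2\epsilon \le 0$), while the upper endpoint preserves positive signal in the robust feature after the training-time perturbation, so that $f_{\rob}$ is actually a sensible classifier on $\cT_{\adv}$.
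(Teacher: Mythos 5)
Your proof is correct, and it reaches the paper's conclusion by a more global route than the paper's own argument. The paper (Lemmas~\ref{lemma.b1} and~\ref{lemma.b2}, followed by a Bayes-optimality step) proceeds coordinate by coordinate: it fixes a putative optimal $\vw$, examines one weight $w_i$ at a time, and derives a contradiction from the fact that after the worst-case shift the $i$-th feature contributes a Gaussian with mean $2\epsilon-\eta\ge 0$; only after zeroing out coordinates $i\ge 2$ does it reduce the remaining one-dimensional problem to a standard Bayes-optimal classification. You instead solve the inner maximization once and for all ($\min_{\|\vdelta\|_\infty\le\epsilon} y\vw^\top\vdelta=-\epsilon\|\vw\|_1$), obtain the closed form $\Phi(-\mu(\vw)/(\sigma\|\vw\|_2))$ for the $0$-$1$ adversarial risk, and maximize the signal-to-noise ratio directly; the same key inequality $2\epsilon\ge\eta$ reappears as the non-positivity of each summand $(\eta-\epsilon)w_i-\epsilon|w_i|$. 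Your version is shorter and makes the role of the threshold $\epsilon=\eta/2$ transparent, at the cost of being tied to the $0$-$1$ loss and to bias-free classifiers. Three small points to tighten: (i) the paper's optimization~(\ref{equa.linear_adv_risk}) includes an intercept $b$, so you should either argue that $b=0$ is optimal by the symmetry of the two class-conditional Gaussians (as the paper verifies in its final step) or restrict to $b=0$ explicitly; (ii) the appeal to ``$\vw\mapsto-\vw$ symmetry'' is not quite right, since $\mu(-\vw)\neq-\mu(\vw)$ --- the correct and easy statement is that $w_1<0$ forces $\mu(\vw)\le w_1<0<\mu(\vw_{\rob})$, so no optimum has $w_1<0$; and (iii) the closing remark that the risk is a monotone function of $g(\vw)$ for ``any monotone margin-based surrogate loss'' is false in general, because a generic surrogate's expectation depends on $\mu(\vw)$ and $\sigma\|\vw\|_2$ separately rather than only on their ratio; fortunately the theorem as formalized in the appendix only concerns the $0$-$1$ adversarial risk, so this overreach is harmless to your argument.
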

This theorem indicates that the adversarial perturbation is harmless: $\epsilon$-robustness can still be obtained by adversarial training on such perturbed training data.

\paragraph{A harmful case.}
However, this defense effect can be completely broken by the hypocritical perturbation. That is, the adversary can instead shift each feature towards $y$. Hence, the learner would see input-label pairs $(\vx, y)$ sampled i.i.d.~from a training distribution $\cT_{\hyp}$ as follows\footnote{To see how this relates to the hypocritical perturbation (\ref{eq:hypocritical_perturbation}), let us consider the logistic loss $\cL(f(\vx), y) = \log(1+\exp(-y f(\vx)))$, and use the natural classifier~(\ref{eq:std_classifier}) as the crafting model. Then, the problem~(\ref{eq:hypocritical_perturbation}) has a closed-form solution $\vp^{*} = y\epsilon \cdot \sign(\vw_{\nat}) = [y\epsilon, \ldots, y\epsilon]$. Applying $\vp^*$ to each $\vx$ yields the distribution $\cT_{\hyp}$.}:
\begin{equation}
\label{eq:mixGau_hyp}
\begin{aligned}
    & y \stackrel{u.a.r}{\sim}\{-1,+1\}, \quad x_1 \sim \cN((1+\epsilon)y, \sigma^2), \quad x_2, \ldots, x_{d+1} \stackrel{i.i.d}{\sim} \cN((\eta+\epsilon) y, \sigma^2),
\end{aligned}
\end{equation}
where each feature of the samples from $\cT_{\hyp}$ is reinforced by a magnitude of $\epsilon$. While these samples become more separable, adversarial training on them using the same defense budget will fail to neutralize the hypocritically perturbed features. Consequently, the resulting classifiers will inevitably have low adversarial accuracy. We make this formal in the following theorem proved in \cref{app:proof-thm-hyp-harmful}.

\begin{theorem}[Hypocritical perturbation is harmful]
\label{thm:hyp-harmful}
The optimal linear $\linf$-robust classifier obtained by minimizing the adversarial risk on the perturbed data $\cT_{\hyp}$~(\ref{eq:mixGau_hyp}) with a defense budget $\epsilon$ is equivalent to the natural classifier~(\ref{eq:std_classifier}).
\end{theorem}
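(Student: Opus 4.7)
The plan is to cast the adversarial-risk minimization over linear classifiers $f(\vx)=\sign(\vw^\top\vx)$ as an explicit optimization in $\vw$, and then exploit a cancellation that is specific to $\cT_{\hyp}$.

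First, I would reduce the inner maximization to a closed form. With $\ell_\infty$ budget $\epsilon$, the worst-case perturbation against label $y$ is $\vdelta=-\epsilon\, y\cdot\sign(\vw)$, which lowers the signed margin by exactly $\epsilon\|\vw\|_1$: $y\vw^\top(\vx+\vdelta)=y\vw^\top\vx-\epsilon\|\vw\|_1$. For 0-1 loss (and any margin-monotone surrogate), the adversarial risk is therefore a monotone function of this reduced margin, so minimizing the adversarial risk is equivalent to maximizing $\Pr\bigl[y\vw^\top\vx-\epsilon\|\vw\|_1>0\bigr]$.

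Second, I would compute the distribution of $y\vw^\top\vx$ under $\cT_{\hyp}$. Since $y\cdot x_1\sim\cN(1+\epsilon,\sigma^2)$ and $y\cdot x_i\sim\cN(\eta+\epsilon,\sigma^2)$ for $i\ge 2$, we get $y\vw^\top\vx\sim\cN(\mu(\vw),\sigma^2\|\vw\|_2^2)$, where $\mu(\vw)=(1+\epsilon)w_1+(\eta+\epsilon)\sum_{i=2}^{d+1}w_i$. Hence the adversarial accuracy equals
\[
\Phi\!\left(\frac{\mu(\vw)-\epsilon\|\vw\|_1}{\sigma\|\vw\|_2}\right),
\]
where $\Phi$ is the standard Gaussian CDF, and it suffices to maximize the argument.

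Third, by symmetry I may restrict to $\vw\ge \vzero$: flipping the sign of any negative coordinate preserves both $\|\vw\|_1$ and $\|\vw\|_2$ while weakly increasing $\mu(\vw)$, because every feature mean has the same sign as $y$. On this orthant $\|\vw\|_1=w_1+\sum_{i\ge 2}w_i$, so the $\epsilon$-shifts in the feature means cancel the defensive penalty $\epsilon\|\vw\|_1$ exactly and term-by-term. The objective collapses to
\[
\frac{w_1+\eta\sum_{i=2}^{d+1}w_i}{\sigma\sqrt{w_1^2+\sum_{i=2}^{d+1}w_i^2}}.
\]
Finally, applying Cauchy-Schwarz (viewing the numerator as the inner product of $(w_1,w_2,\ldots,w_{d+1})$ with $(1,\eta,\ldots,\eta)$) shows the ratio is maximized precisely when $\vw\propto(1,\eta,\ldots,\eta)=\vw_{\nat}$. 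Since $\sign$ is invariant under positive scaling, the optimal classifier coincides with $f_{\nat}$.

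The main obstacle I anticipate is the bookkeeping in the third step: making it transparent that the hypocritical $\epsilon$-shift of every feature mean cancels the $\epsilon\|\vw\|_1$ adversarial penalty. This cancellation is the conceptual crux — adversarial training on $\cT_{\hyp}$ with budget $\epsilon$ reduces to natural training on $\cD$, whose Bayes-optimal solution is the brittle classifier $f_{\nat}$. Everything else is a routine Cauchy-Schwarz argument, and the conclusion then contrasts sharply with Theorem~\ref{thm:adv-harmless} because in the adversarial-perturbation case the $\epsilon$-shifts in the means add to (rather than cancel) the defense term.
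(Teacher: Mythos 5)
Your proof is correct, and its conceptual core is the same as the paper's: the $+\epsilon$ hypocritical shift in every feature mean exactly cancels the $\epsilon$-penalty incurred by the worst-case test perturbation, so adversarial risk minimization on $\cT_{\hyp}$ with budget $\epsilon$ collapses to natural risk minimization on the original $\cD$, whose optimum is $f_{\nat}$. The execution differs in two places, both to your advantage. First, where the paper pins down the signs of the weights via a contradiction lemma (its Lemma~\ref{lemma.b3}, arguing that any $w_i \le 0$ can be profitably flipped because the worst-case contribution has mean $-[\vw_{\nat}]_i - 2\epsilon < 0$), you use a sign-flip symmetry that preserves $\|\vw\|_1$ and $\|\vw\|_2$ while weakly increasing the mean margin --- shorter and equivalent. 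Second, the paper ends by asserting that the Bayes-optimal linear classifier for the symmetric Gaussian mixture $\cD$ is $\vw = \vw_{\nat}$, $b = 0$, whereas you actually prove that step via Cauchy--Schwarz; this makes your write-up more self-contained. The one omission is the bias term: the paper's classifiers are $\sign(\vw^\top\vx + b)$, and your single-$\Phi$ formula implicitly sets $b = 0$. With $b \neq 0$ the accuracy is an average of two class-conditional Gaussian probabilities in which $b$ enters with opposite signs, so $b = 0$ is optimal by symmetry --- you should say this in one line, but it does not affect the argument.
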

This theorem implies that the conventional defense scheme can only produce non-robust classifiers, whose adversarial accuracy is as low as that of the natural classifier~(\ref{eq:std_classifier}). That is saying, if $\epsilon=2\eta$, $\sigma \le 1/3$ and $d \ge 3/\eta^2$, the classifiers cannot get adversarial accuracy better than $1\%$.

\paragraph{Implications.}
As it turns out, the seemingly beneficial features in $\cT_{\hyp}$~(\ref{eq:mixGau_hyp}) are actually hypocritical. Therefore, the adversary is highly motivated to hide such hypocritical features in the training data, intending to cajole an innocent learner into relying on the non-robust features. 
Intriguingly, we notice that the natural classifier~(\ref{eq:std_classifier}) (i.e., the crafting model used to derive the distribution $\cT_{\hyp}$) actually has $\eta$-robustness. This is essentially because the non-robust features in the data~(\ref{eq:mixGau}) can resist small-magnitude perturbations by design. This motivates us to use ``slightly robust'' classifiers as the crafting model in practice. Indeed, our experimental results show that training the crafting model with $0.25\epsilon$-robustness performs the best for conducting stability attacks. This is different from the previous works~\cite{tao2021provable, fowl2021adversarial} that use naturally trained models as the crafting model for poisoning.

\section{The Necessity of Large Defense Budget}
\label{sec:necessity-large-budget}

We have shown that the hypocritical perturbation is harmful to the conventional adversarial training scheme. Fortunately, it is possible to strengthen the defense by using a larger defense budget, while the crux of the matter is how large the budget is needed.

We find that the minimum value of the defense budget for a successful defense depends on the specific data distribution. Let us first consider the hypocritical data in $\cT_{\hyp}$~(\ref{eq:mixGau_hyp}). In this case, we show that a larger defense budget is necessary in the following theorem proved in \cref{app:proof-thm-eps-eta-necessary}.

\begin{theorem}[$\epsilon+\eta$ is necessary]
\label{thm:eps-eta-necessary}
The optimal linear $\linf$-robust classifier obtained by minimizing the adversarial risk on the perturbed data $\cT_{\hyp}$~(\ref{eq:mixGau_hyp}) with a defense budget $\epsilon+\eta$ is equivalent to the robust classifier~(\ref{eq:rob_classifier}). Moreover, any defense budget lower than $\epsilon+\eta$ will yield classifiers that still rely on all the non-robust features.
\end{theorem}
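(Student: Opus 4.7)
The plan is to mimic the reduction used in \cref{thm:adv-harmless,thm:hyp-harmful} and then optimize a linear objective on the $\ell_2$-sphere. I would consider an arbitrary linear classifier $f(\vx) = \sign(\vw^\top \vx)$ and compute its adversarial risk on $\cT_{\hyp}$ under $\linf$ perturbations of magnitude at most $\epsilon' := \epsilon + \eta$. As in the earlier proofs, the worst-case perturbation for a linear score is $\vdelta = -y \epsilon' \sign(\vw)$, so the inner minimum is $y\vw^\top(\vx+\vdelta) = y\vw^\top \vx - \epsilon' \|\vw\|_1$. Because $y x_1 \sim \cN(1+\epsilon, \sigma^2)$ and $y x_i \sim \cN(\eta+\epsilon, \sigma^2)$ for $i \ge 2$ under $\cT_{\hyp}$, the quantity $y\vw^\top \vx$ is Gaussian with mean $\mu(\vw) := (1+\epsilon) w_1 + (\eta+\epsilon) \sum_{i \ge 2} w_i$ and variance $\sigma^2 \|\vw\|_2^2$. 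Hence the adversarial risk equals $\Phi\bigl((\epsilon' \|\vw\|_1 - \mu(\vw))/(\sigma \|\vw\|_2)\bigr)$, and minimizing it is equivalent to maximizing the ratio $h(\vw) := (\mu(\vw) - \epsilon' \|\vw\|_1)/\|\vw\|_2$.

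Next I would argue that without loss of generality every $w_i \ge 0$: replacing any negative coordinate by its absolute value preserves both $\|\vw\|_1$ and $\|\vw\|_2$ but strictly increases $\mu(\vw)$, so no negative weight can be optimal. Fixing the normalization $\|\vw\|_2 = 1$, we have $\|\vw\|_1 = \sum_i w_i$, and $h$ collapses to a linear form
\begin{equation*}
h(\vw) = (1 + \epsilon - \epsilon')\, w_1 + (\eta + \epsilon - \epsilon') \sum_{i \ge 2} w_i
\end{equation*}
over the nonnegative portion of the unit sphere. Substituting $\epsilon' = \epsilon + \eta$ annihilates the coefficient on the non-robust coordinates, leaving $h(\vw) = (1-\eta)\, w_1$, which is uniquely maximized (up to positive scaling) by $\vw = \vw_{\rob}$; this proves the first claim.

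For the second claim, take any $\epsilon' < \epsilon + \eta$, so that $b := \eta + \epsilon - \epsilon' > 0$ is a strictly positive coefficient on every non-robust coordinate, while $a := 1 + \epsilon - \epsilon'$ satisfies $a - b = 1 - \eta > 0$. Maximizing a positive linear form $a w_1 + b \sum_{i \ge 2} w_i$ over the nonnegative unit sphere admits a unique optimum, obtained either by Cauchy--Schwarz or a Lagrange-multiplier computation, of the form $w_1 \propto a$ and $w_i \propto b$ for $i \ge 2$. Since $b > 0$, every non-robust coordinate carries strictly positive weight, so the optimal classifier depends nontrivially on \emph{all} non-robust features. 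The main subtlety I anticipate is the WLOG sign-flipping step: one must verify it gives a \emph{strict} improvement whenever a coordinate is nonzero, which is what upgrades the weak inequality $w_i \ge 0$ to a strictly positive $w_i$ in the strict-inequality regime and rules out the degenerate ``null'' optimum $\vw \propto \vw_{\rob}$ from reappearing when $\epsilon' < \epsilon + \eta$.
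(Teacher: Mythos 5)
Your proposal is correct, and it reaches the conclusion by a genuinely different route than the paper. The paper proceeds coordinate-by-coordinate: a contradiction lemma (\cref{lemma.b4}) shows that under budget $\epsilon+\eta$ the worst-case contribution of each non-robust coordinate becomes a mean-zero Gaussian, so its weight can be set to zero without increasing the risk; the problem is then reduced to a one-dimensional Bayes-optimal classification on an auxiliary Gaussian $\cD_{\text{tmp}}$ with mean $(1-\eta,0,\ldots,0)$, and the ``lower budget'' claim is handled by pointing back to the sign argument of \cref{lemma.b3}. You instead solve the inner maximization globally to get the closed form $\Phi\bigl((\epsilon'\|\vw\|_1-\mu(\vw))/(\sigma\|\vw\|_2)\bigr)$, reduce the whole problem to maximizing a linear functional over the nonnegative unit sphere, and read off the answer from the sign of the coefficient $\eta+\epsilon-\epsilon'$ (zero at $\epsilon'=\epsilon+\eta$, positive below it, with Cauchy--Schwarz giving the explicit optimum $\vw\propto(1+\epsilon-\epsilon',\,\eta+\epsilon-\epsilon',\ldots)$). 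Your route buys a uniform treatment of both claims of the theorem, a uniqueness statement the paper's ``can result in'' phrasing does not quite deliver, and an explicit formula for the optimal weights at every intermediate budget; the paper's route is more modular, reusing the same lemma machinery across \cref{thm:adv-harmless,thm:hyp-harmful,thm:eps-eta-necessary}. One small gap to patch: the paper's hypothesis class (\ref{equa.linear_adv_risk}) includes a bias term $b$, which your single-Gaussian reduction silently drops (with $b\neq 0$ the risk is an average of two $\Phi$ terms, one per class). A one-line symmetry/monotonicity argument shows $b=0$ is optimal whenever the classifier beats chance, so this is harmless, but it should be stated since the equivalence to the ratio maximization only holds after fixing $b=0$.
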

This theorem implies that, in the case of the mixture Gaussian distribution under the threat of $\epsilon$-bounded hypocritical perturbations, the learner needs a slightly larger defense budget $\epsilon+\eta$ to ensure $\epsilon$-robustness.

While it is challenging to analyze the minimum value of the defense budget in the general case, the following theorem provides an upper bound of the budget.

\begin{theorem}[General case]
\label{thm:two-eps-sufficient}
For any data distribution and any adversary with an attack budget $\epsilon$, training models to minimize the adversarial risk with a defense budget $2\epsilon$ on the perturbed data is sufficient to ensure $\epsilon$-robustness.
\end{theorem}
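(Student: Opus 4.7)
The plan is to show that the $\epsilon$-adversarial risk on the clean distribution is pointwise dominated by the $2\epsilon$-adversarial risk on the perturbed distribution, via a triangle inequality on perturbation vectors. This means any minimizer of the training-time objective also controls the quantity we actually care about at test time.

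First I would fix notation. Let $\cD$ be the original distribution. The adversary produces a perturbation map taking each $(\vx, y)$ to some $\vp(\vx, y)$ with $\|\vp(\vx, y)\| \le \epsilon$, and the learner sees the perturbed training distribution $\cD_{\cP}$ consisting of pairs $(\vx + \vp, y)$. The training objective is $\cR_{\adv, 2\epsilon}(f; \cD_{\cP}) = \bbE_{(\vx, y) \sim \cD}\bigl[\max_{\|\tilde{\vdelta}\| \le 2\epsilon} \cL(f(\vx + \vp + \tilde{\vdelta}), y)\bigr]$, while the quantity we wish to control is $\cR_{\adv, \epsilon}(f; \cD) = \bbE_{(\vx, y) \sim \cD}\bigl[\max_{\|\vdelta\| \le \epsilon} \cL(f(\vx + \vdelta), y)\bigr]$.

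The key step is a containment of balls. For any test-time perturbation $\vdelta$ with $\|\vdelta\| \le \epsilon$, set $\tilde{\vdelta} \coloneqq \vdelta - \vp$. Then $\|\tilde{\vdelta}\| \le \|\vdelta\| + \|\vp\| \le 2\epsilon$, and $\vx + \vdelta = (\vx + \vp) + \tilde{\vdelta}$. Hence every point in the $\epsilon$-ball around $\vx$ lies in the $2\epsilon$-ball around $\vx + \vp$, which gives the pointwise inequality
\begin{equation*}
\max_{\|\vdelta\| \le \epsilon} \cL(f(\vx + \vdelta), y) \;\le\; \max_{\|\tilde{\vdelta}\| \le 2\epsilon} \cL(f(\vx + \vp + \tilde{\vdelta}), y).
\end{equation*}
Taking expectation over $(\vx, y) \sim \cD$ yields $\cR_{\adv, \epsilon}(f; \cD) \le \cR_{\adv, 2\epsilon}(f; \cD_{\cP})$ for every hypothesis $f$, which is precisely the claimed sufficiency: minimizing the $2\epsilon$-robust training loss on the manipulated data upper-bounds and therefore controls the $\epsilon$-robust risk on clean data.

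Honestly, I do not foresee a substantive obstacle here; the argument is essentially a one-line triangle inequality. The only mild care points are (i) making explicit that $\vp$ may depend on $(\vx, y)$ but not on $\vdelta$, so the substitution $\tilde{\vdelta} = \vdelta - \vp$ is legal inside the inner max, and (ii) noting that the bound holds uniformly in $f$, so it transfers from the specific minimizer of the training objective to the true adversarial risk. If one wanted to go further and argue tightness (i.e., that $2\epsilon$ cannot in general be improved), that would require an adversarial construction and is not claimed by this theorem; Theorem~\ref{thm:eps-eta-necessary} already shows that in favorable distributions a smaller budget suffices.
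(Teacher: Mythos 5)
Your proof is correct and takes essentially the same route as the paper's: both arguments reduce to the observation that the $\epsilon$-ball around $\vx$ is contained in the $2\epsilon$-ball around $\vx+\vp$ (the paper writes the $2\epsilon$-max as a max over $\vdelta+\vxi$ with $\|\vdelta\|\le\epsilon$, $\|\vxi\|\le\epsilon$ and sets $\vxi=-\vp$, which is your substitution $\tilde{\vdelta}=\vdelta-\vp$ in different clothing), yielding pointwise domination of the clean $\epsilon$-adversarial loss by the $2\epsilon$-adversarial loss at the perturbed point. The only cosmetic difference is that the paper states the bound as an empirical sum over training points while you take an expectation over $\cD$.
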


The proof of \cref{thm:two-eps-sufficient} is deferred in \cref{app:proof-thm-two-eps-sufficient}. It implies that
a defense budget twice to the attack budget should be safe enough under the threat of stability attacks.
\cref{thm:eps-eta-necessary} also suggests that the minimum budget might be much smaller than $2\epsilon$, and it depends on the specific attack methods and data distributions. In the following section, we empirically search for an appropriate defense budget on real-world datasets.

\section{Experiments}
\label{sec:experiments}

In this section, we conduct comprehensive experiments to demonstrate the effectiveness of the hypocritical perturbation as stability attacks on popular benchmark datasets and the necessity of an adaptive defense for better robustness.

We conduct stability attacks by applying hypocritical perturbations into the training set. We focus on an $\linf$ adversary with an \textit{attack budget} $\epsilon_{a}=8/255$ by following~\cite{huang2021unlearnable, pmlr-v139-yuan21b, tao2021provable, fowl2021adversarial}. Our crafting model is adversarially trained with a \textit{crafting budget} $\epsilon_c=2/255$ for 10 epochs before generating perturbations. Unless otherwise specified, we use ResNet-18~\cite{he2016deep} as the default architecture for both the crafting model and the learning model. For adversarial training, we mainly follow the settings in previous studies~\cite{zhang2019theoretically, wang2019improving, rice2020overfitting}. By convention, the \textit{defense budget} is equal to the attack budget, i.e., $\epsilon_d=8/255$. More details on experimental settings are provided in~\cref{sec:experimental-settings}.

\begin{table*}[!t]
  \centering
  \caption{Test robustness (\%) of PGD-AT using a defense budget $\epsilon_d=8/255$ on CIFAR-10.}
  \label{tab:bench-attack}
  \begin{center}
  \begin{small}
  \begin{tabular}{@{}lcccccc@{}}
  \toprule
  Attack               & Natural            & FGSM             & PGD-20           & PGD-100          & CW$_\infty$      & AutoAttack       \\ \midrule
  None (clean)         & 82.17 & 56.63 & 50.63 & 50.35 & 49.37 & 46.99 \\
  DeepConfuse~\cite{feng2019learning}          & 81.25 & 54.14 & 48.25 & 48.02 & 47.34 & 44.79 \\
  Unlearnable Examples~\cite{huang2021unlearnable} & 83.67 & 57.51 & 50.74 & 50.31 & 49.81 & 47.25 \\
  NTGA~\cite{pmlr-v139-yuan21b}                 & 82.99 & 55.71 & 49.17 & 48.82 & 47.96 & 45.36 \\
  Adversarial Poisoning~\cite{fowl2021adversarial} &
    \textbf{77.35} &
    53.93 &
    49.95 &
    49.76 &
    48.35 &
    46.13 \\
  Hypocritical Perturbation (ours) &
    88.07 &
    \textbf{47.93} &
    \textbf{37.61} &
    \textbf{36.96} &
    \textbf{38.58} &
    \textbf{35.44} \\ \bottomrule
  \end{tabular}
  \end{small}
  \end{center}
  \magic
  \end{table*}
  
  \begin{table*}[!t]
  \centering
  \caption{Test robustness (\%) of PGD-AT using a defense budget $\epsilon_d=8/255$ across different datasets.}
  \label{tab:bench-attack-datasets}
  \begin{center}
  \begin{small}
  \begin{tabular}{@{}llcccccc@{}}
  \toprule
  Dataset & Attack & Natural & FGSM & PGD-20 & PGD-100 & CW$_\infty$ & AutoAttack \\ \midrule
  \multirow{3}{*}{SVHN} & None & 93.95 & 71.83 & 57.15 & 56.02 & 54.93 & 50.50 \\
   & Adv. & \textbf{87.50 } & \textbf{56.12} & 46.71 & 46.32 & 45.70 & 42.48 \\
   & Hyp. & 96.06   & 59.41 & \textbf{38.17 } & \textbf{37.29 } & \textbf{40.54 } & \textbf{35.43 } \\ \midrule
  \multirow{3}{*}{CIFAR-100} & None & 56.15   & 31.50 & 28.38 & 28.28 & 26.53 & 24.30 \\
   & Adv. & \textbf{52.14  } & 28.59 & 26.19 & 26.09 & 24.36 & 22.71 \\
   & Hyp. & 62.22   & \textbf{26.38} & \textbf{21.51} & \textbf{21.13} & \textbf{21.13} & \textbf{18.74} \\ \midrule
  \multirow{3}{*}{Tiny-ImageNet} & None & \textbf{49.34  } & 25.67 & 22.99 & 22.86 & 20.67 & 18.54 \\
   & Adv. & 49.52   & 22.93 & 20.01 & 19.91 & 18.75 & 16.83 \\
   & Hyp. & 55.92   & \textbf{20.21} & \textbf{15.61} & \textbf{15.26} & \textbf{14.99} & \textbf{12.53} \\ \bottomrule
  \end{tabular}
  \end{small}
  \end{center}
  \magic
  \end{table*}
  
  \begin{table*}[!t]
  \centering
  \caption{Test robustness (\%) of PGD-AT using a defense budget $\epsilon_d=8/255$ on CIFAR-10 across different architectures. Test robustness is evaluated by PGD-20. Values in parenthesis denote the accuracy on natural test data.}
  \label{tab:bench-attack-architectures}
  \begin{center}
  \begin{small}
  \begin{tabular}{@{}lcccccc@{}}
  \toprule
  Attack & VGG-16 & GoogLeNet & DenseNet-121 & MobileNetV2 & WideResNet-28-10 \\ \midrule
  None   & 47.37 (77.15) & 50.67 (83.03) & 49.92 (80.08) & 48.51 (80.83) & 53.91 (85.81) \\
  Adv.   & 44.70 (73.24) & 47.72 (79.34) & 48.00 (78.17) & 45.90 (74.61) & 51.01 (82.43) \\
  Hyp.   & \textbf{34.34} (87.20) & \textbf{37.03} (87.61) & \textbf{37.58} (88.04) & \textbf{35.58} (87.04) & \textbf{41.07} (89.14) \\ \bottomrule
  \end{tabular}
  \end{small}
  \end{center}
  \magic
  \end{table*}

\subsection{Benchmarking (Non-)Robustness}

\paragraph{Attack evaluation.}
We compare our crafted hypocritical perturbation to existing methods, which were originally proposed as delusive attacks, including DeepConfuse (which builds an adversarial auto-encoder to generate their perturbations)~\cite{feng2019learning}, Unlearnable Examples (which use a min-min bi-level optimization process to pre-train their crafting model)~\cite{huang2021unlearnable}, NTGA (which adopts neural tangent kernels as its crafting model)~\cite{pmlr-v139-yuan21b}, and Adversarial Poisoning (whose crafting model is simply a naturally trained classifier)~\cite{fowl2021adversarial}. 
It is noteworthy that none of these previous works evaluated the test robustness of their poisoned models.

Results using ResNet-18 on CIFAR-10 are summarized in~\cref{tab:bench-attack}. 
``Natural'' denotes the accuracy on natural test data. Various test-time adversarial attacks are used to evaluate test robustness, including FGSM, PGD-20/100, CW$_{\infty}$ ($\linf$ version of CW loss~\cite{carlini2017towards} optimized by PGD-100), and AutoAttack (a reliable evaluation metric via an ensemble of diverse attacks~\cite{croce2020reliable}).
We observe that the hypocritical perturbation widely outperforms previous training-time perturbations in degrading the test robustness of PGD-AT~\cite{madry2018towards}. This demonstrates that stability attacks are indeed harmful to the conventional defense scheme. We note that our method increases the natural accuracy. This is reasonable, since our analysis in~\cref{sec:hyp-harmful} has implied that the hypocritical perturbation can increase model reliance on the non-robust features, which are predictive but brittle~\cite{ilyas2019adversarial}. 

Moreover, we evaluate the hypocritical perturbation on other benchmark datasets including SVHN, CIFAR-100, and Tiny-ImageNet. Both the crafting model and the victim model use the ResNet-18 architecture. Results are summarized in~\cref{tab:bench-attack-datasets}. ``Hyp.''~denotes the hypocritical perturbation generated by our crafting model. As a comparison, we also evaluate the adversarial perturbation generated using the same crafting model (denoted as ``Adv.''). Again, the results show that the hypocritical perturbations are more threatening than the adversarial perturbations to standard adversarial training. This phenomenon is consistent with our analytical results in~\cref{sec:hyp-harmful}. 

Besides, we find that the hypocritical perturbation can transfer well from ResNet-18 to other architectures, successfully degrading the test robustness of a wide variety of popular architectures including VGG-16~\cite{simonyan2014very}, GoogLeNet~\cite{szegedy2015going}, DenseNet-121~\cite{huang2017densely}, MobileNetV2~\cite{sandler2018mobilenetv2}, and WideResNet-28-10~\cite{zagoruyko2016wide}, as shown in~\cref{tab:bench-attack-architectures}. Note that this is a completely black-box setting where the attacker has no knowledge of the victim model's initialization, architecture, learning rate scheduler, etc.

\paragraph{Adaptive defense.}
To prevent the harm of stability attacks, our analysis in~\cref{sec:necessity-large-budget} suggests that a larger defense budget would be helpful. We find that this is indeed the case on CIFAR-10. As shown in~\cref{tab:bench-defense}, a large defense budget $\epsilon_d=14/255$ for PGD-AT performs significantly better than the conventional defense budget $\epsilon_d=8/255$. We also combine several data augmentations with PGD-AT as defenses by following~\citet{fowl2021adversarial}. The results show that they are beneficial, while their improvements are inferior to PGD-AT with $\epsilon_d=14/255$. In addition, we adopt other adversarial training variants including TRADES~\cite{zhang2019theoretically} and MART~\cite{wang2019improving} to defend against the hypocritical perturbation, and find that they achieve comparable defense effects with large defense budgets.

Finally, we note that the adaptive defense has several limitations: \textit{i)} robust accuracy is improved at the cost of natural accuracy; \textit{ii)} finding an appropriate defense budget is time-consuming for adversarial training; \textit{iii)} adversarial training with large budgets may lead to learning obstacles such as inherent large sample complexity~\cite{schmidt2018adversarially}. We leave the detailed study of these questions as future work.

\begin{figure*}[t]
  \centering
  \subfigure[Crafting budget]{
  \label{fig:training-budget-for-crafting-model}
     \centering
     \includegraphics[width=0.22\textwidth]{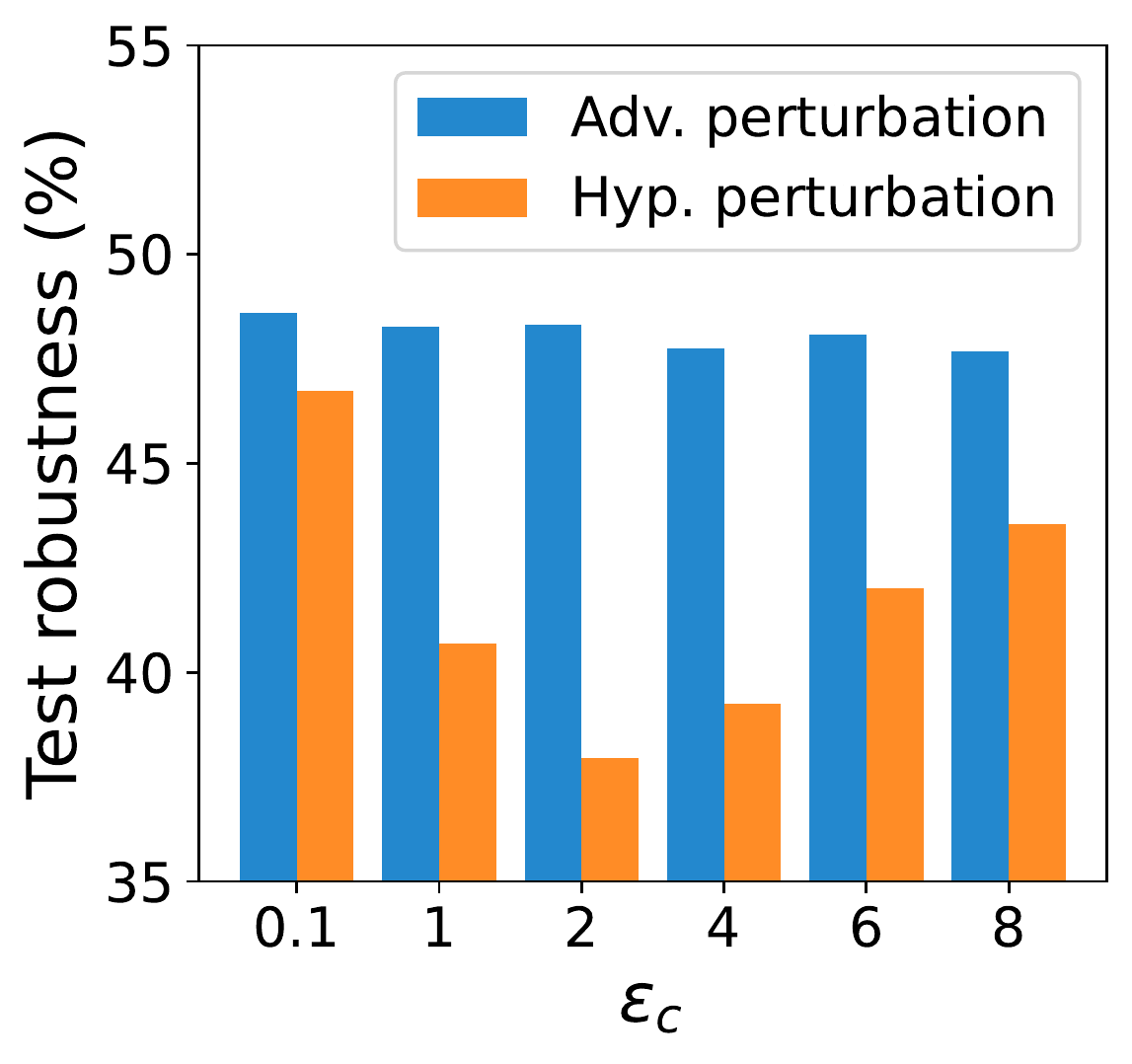}
  }
  \hfill
  \subfigure[Crafting epoch]{
  \label{fig:training-epoch-for-crafting-model}
     \centering
     \includegraphics[width=0.22\textwidth]{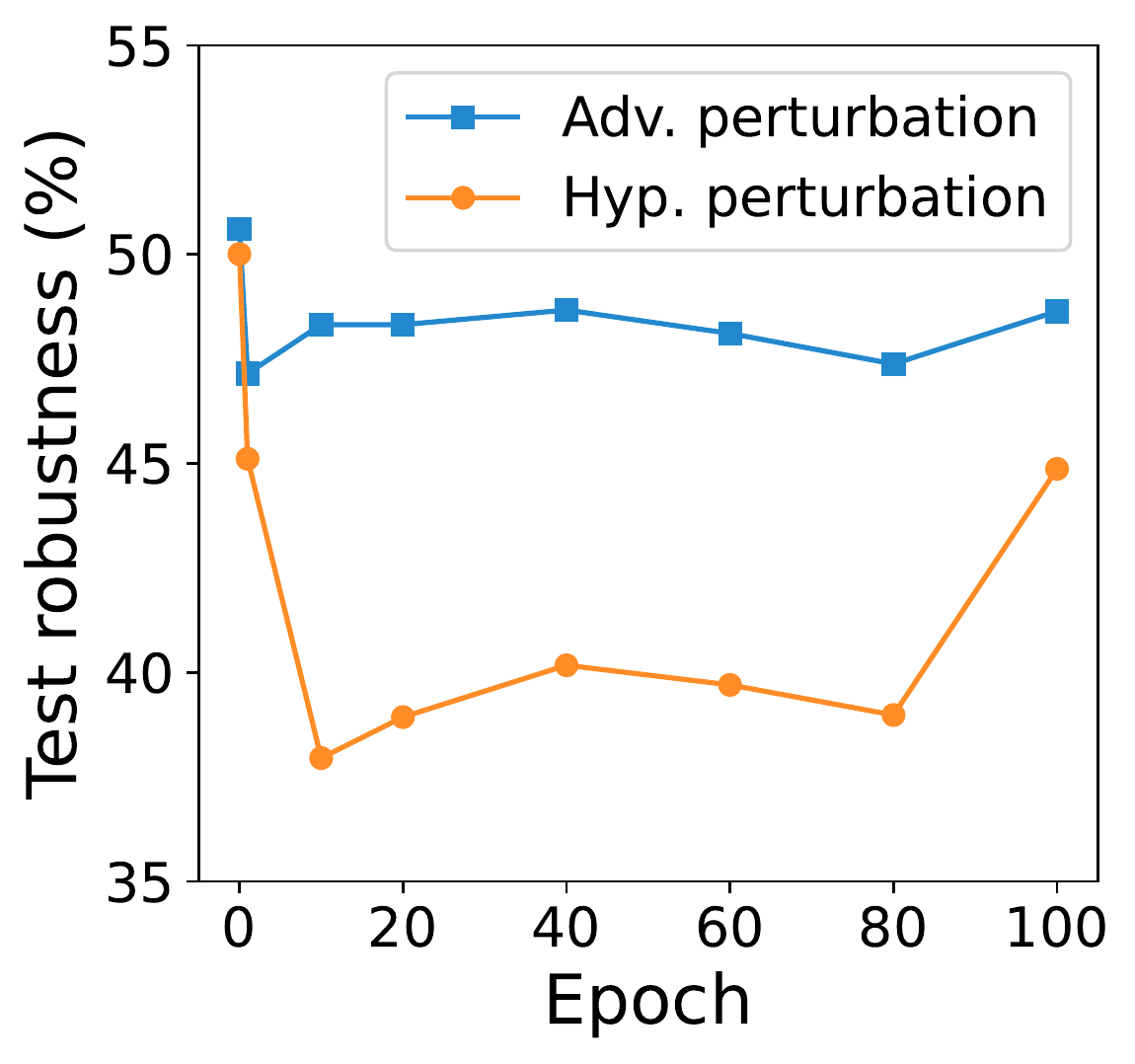}
  }
  \hfill
  \subfigure[Number of PGD steps]{
  \label{fig:pgd-steps-for-crafting-model}
     \centering
     \includegraphics[width=0.22\textwidth]{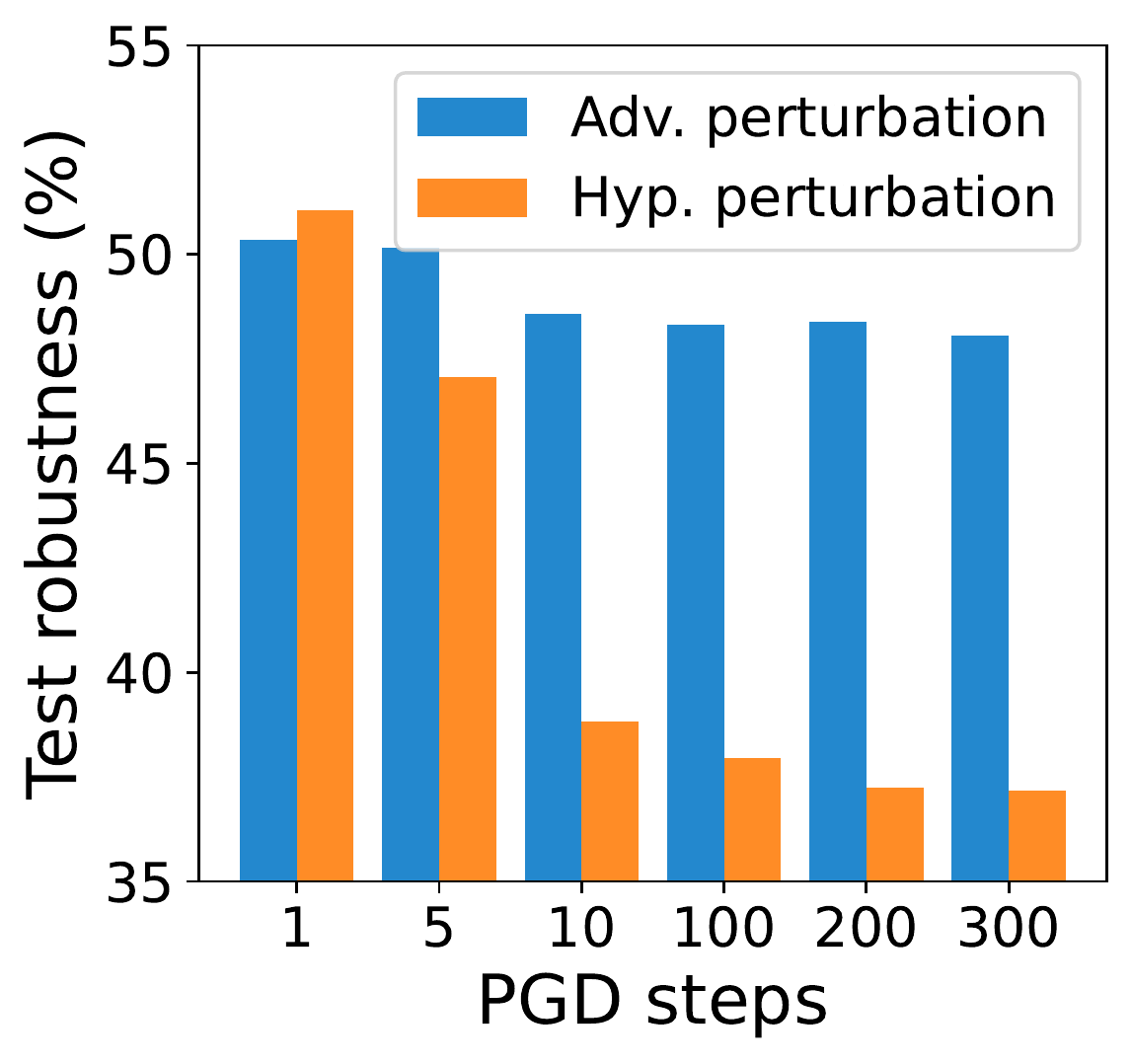}
  }
  \hfill
  \subfigure[Large defense budget]{
  \label{fig:large-defense-budget}
     \centering
     \includegraphics[width=0.22\textwidth]{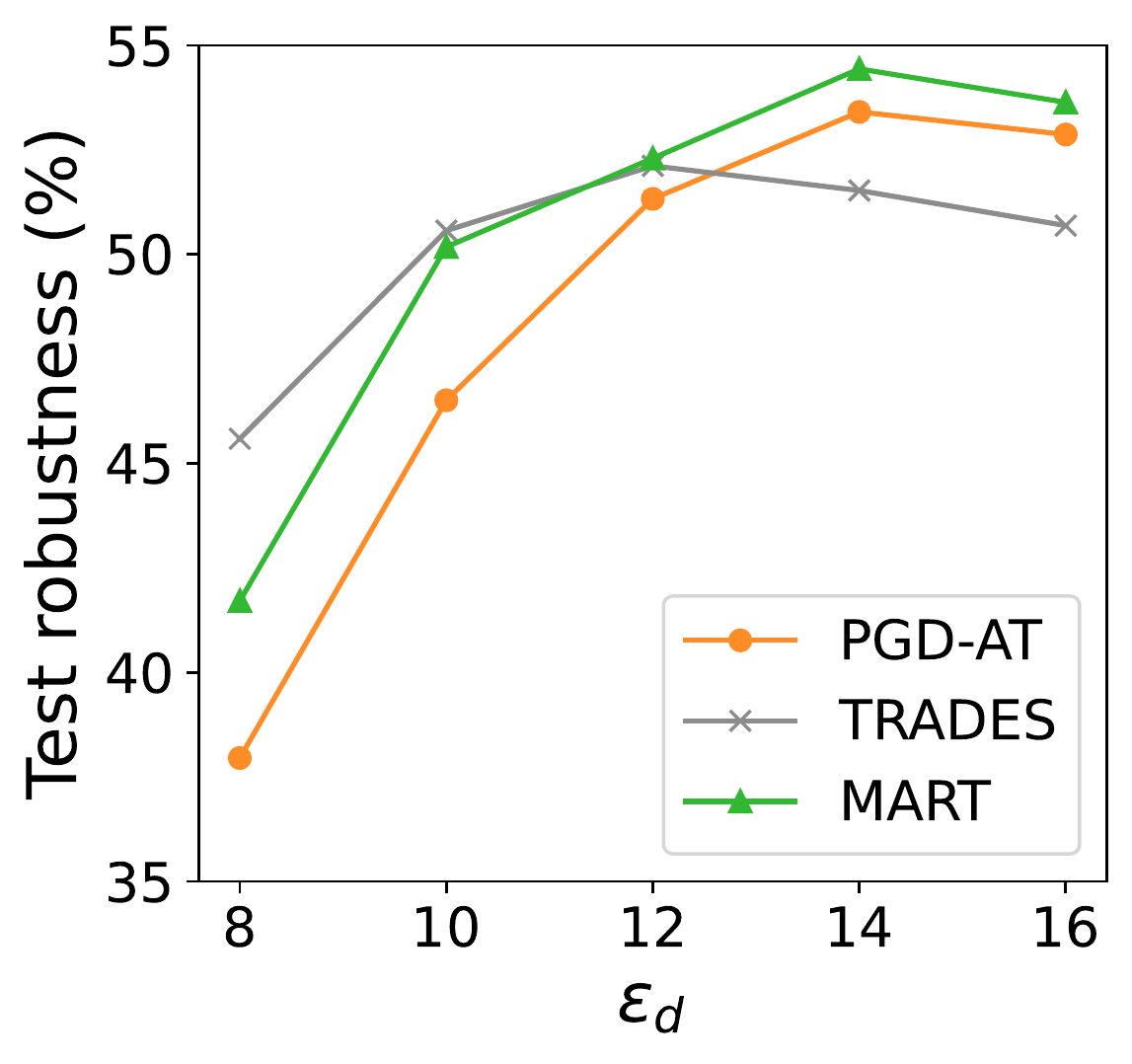}
  }
\vspace{-5px}
\caption{
The ablation study experiments on CIFAR-10. Test robustness (\%) is evaluated by PGD-20.
}
\label{fig:ablation-study}
\magic
\end{figure*}

\begin{table*}[t]
\centering
\caption{Test robustness (\%) of various adaptive defenses on the hypocritically perturbed CIFAR-10.}
\label{tab:bench-defense}
\begin{center}
\begin{small}
\begin{tabular}{@{}lcccccc@{}}
\toprule
Defense                & Natural                     & FGSM             & PGD-20           & PGD-100          & CW$_\infty$      & AutoAttack       \\ \midrule
PGD-AT ($\epsilon_d=8/255$)  & 88.07              & 47.93 & 37.61 & 36.96 & 38.58 & 35.44 \\
+ Random Noise         & 87.62            & 47.46 & 38.35 & 37.90 & 39.07 & 36.25 \\
+ Gaussian Smoothing   & 83.95            & 50.96 & 42.80 & 42.34 & 42.41 & 40.07 \\
+ Cutout               & \textbf{88.26  } & 49.23 & 39.77 & 39.25 & 40.38 & 37.61 \\
+ AutoAugment          & 86.24            & 48.87 & 40.19 & 39.65 & 37.66 & 35.07 \\
PGD-AT ($\epsilon_d=14/255$) &
 80.00   &
 56.86 &
 52.92 &
 52.83 &
 \textbf{50.36 } &
 \textbf{48.63 } \\
TRADES ($\epsilon_d=12/255$) & 79.63            & 55.73 & 51.77 & 51.63 & 48.68 & 47.83 \\
MART ($\epsilon_d=14/255$) &
 77.29   &
 \textbf{57.10} &
 \textbf{53.82} &
 \textbf{53.71} &
 49.03 &
 47.67 \\ \bottomrule
\end{tabular}
\end{small}
\end{center}
\magic
\end{table*}

\begin{table*}[!t]
\centering
\caption{Test robustness (\%) of PGD-AT by adjusting the amount of clean data included in the manipulated CIFAR-10. Test robustness (\%) is evaluated by PGD-20. Values in parenthesis denote the accuracy on natural test data.}
\label{tab:clean-proportion}
\vspace{0.1ex}
\begin{center}
\begin{small}
\begin{tabular}{@{}lccccc@{}}
\toprule
Attack$\backslash$Clean proportion & 0.1 & 0.2 & 0.4 & 0.6 & 0.8 \\ \midrule
None (clean subset)    & 30.65 (63.90) & 37.99 (70.99) & 44.95 (77.11) & 47.17 (80.33) & 49.78 (81.60) \\ \midrule
Adversarial Perturbation  & 48.33 (77.71) & 48.23 (76.94) & 49.68 (78.54) & 50.15 (82.46) & 51.21 (82.03) \\
Hypocritical Perturbation & \textbf{41.51} (87.49) & \textbf{43.66} (88.30) & \textbf{46.98} (86.46) & \textbf{49.20} (85.29) & \textbf{50.56} (82.72) \\ \bottomrule
\end{tabular}
\end{small}
\end{center}
\magic
\end{table*}

\subsection{Ablation Studies}

In this part, we conduct a set of experiments to provide an empirical understanding of the proposed attack. We train ResNet-18 using PGD-AT on CIFAR-10 by following the same settings described in~\cref{sec:experimental-settings} unless otherwise specified.

\paragraph{Analysis on the crafting method.}
Different from previous work, we use ``slightly robust'' classifiers as our crafting model. \cref{fig:training-budget-for-crafting-model} shows that this technique greatly improves the potency of the hypocritical perturbation, where the crafting budget $\epsilon_c=2/255$ performs best in degrading test robustness. We also observe that training the crafting model for 10$\sim$80 epochs works well in~\cref{fig:training-epoch-for-crafting-model}, and that optimizing the crafted perturbations over 100 steps performs well in~\cref{fig:pgd-steps-for-crafting-model}.
Finally, we note that \citet{fowl2021adversarial} also tried to use adversarially trained models as the crafting model, but they failed to produce an effective attack in this way. This is mainly because they adopted adversarial perturbations as poisons, which, as we observed, are inferior in degrading test performance.

\paragraph{Ablation on defense budget.}
As discussed in~\cref{sec:necessity-large-budget}, we are motivated to find the appropriate defense budget $\epsilon_d$ in the range $[\epsilon\sim 2\epsilon]$. 
\cref{fig:large-defense-budget} shows that the optimal defense budgets against the hypocritical perturbation are $14/255$, $12/255$, and $14/255$ for PGD-AT, TRADES, and MART, respectively. We also observe that all these adversarial training variants are inferior when using the conventional defense budget $8/255$.

\paragraph{Less data.}
We follow~\citet{fowl2021adversarial} to test the effectiveness of attacks by varying the proportion of clean data and perturbed data. Attacks are then considered effective if they cannot significantly increase performance over training on the clean subset alone. As shown in~\cref{tab:clean-proportion}, the proposed attack often degrades the test robustness below what one would achieve using full clean dataset. More importantly, the hypocritical perturbations are consistently more harmful than the adversarial perturbations. This again verifies the superiority of hypocritical pertubrations as stability attacks.

\begin{wraptable}{r}{0.5\linewidth}
  \centering
  \vspace{-19px}
  \caption{Test robustness (\%) of natural training on CIFAR-10.}
  \label{tab:bench-attack-natural-training}
  \vspace{3px}
  \begin{small}
  \begin{tabular}{@{}lcc@{}}
  \toprule
  Attack                & Natural & PGD-20 \\ \midrule
  None (clean)          & 94.23 & 0.00 \\
  DeepConfuse~\cite{feng2019learning}           & 17.22 & 0.00 \\
  Unlearnable Examples~\cite{huang2021unlearnable}  & 22.72 & 0.00 \\
  NTGA~\cite{pmlr-v139-yuan21b}                  & 11.15 & 0.00 \\
  Adversarial Poisoning~\cite{fowl2021adversarial} & \textbf{8.60} & 0.00 \\
  Hypocritical Perturbation (ours) & 75.92 & 0.00 \\ \bottomrule
  \end{tabular}
  \end{small}
  \vspace{-16px}
\end{wraptable}

\paragraph{Effect on natural training.}
As a sanity check, we include the test accuracy of naturally trained models on CIFAR-10 in~\cref{tab:bench-attack-natural-training}. It shows that without adversarial training, the test robustness of the models becomes very poor---all models only have $0\%$ accuracy under PGD-20 attack. Thus, the goal of stability attacks is immediately achieved. On the other hand, We find that our method degrades the test accuracy from $94.23\%$ to $75.92\%$, though this is not the main focus of this work. We also observe that Adversarial Poisoning~\cite{fowl2021adversarial} is the most effective method in degrading the test accuracy of naturally trained models. This observation is consistent with~\citet{fowl2021adversarial}.

\section{Related Work}

\paragraph{Adversarial training.}
The presence of non-robust features has been demonstrated on popular benchmark datasets~\cite{ilyas2019adversarial, kim2021distilling}, which naturally leads to model vulnerability to adversarial examples~\cite{tsipras2018robustness, springer2021little}. To improve test robustness against adversarial examples, adversarial training methods have been developed~\cite{goodfellow2014explaining, madry2018towards, wong2018provable, zhang2019theoretically, tramer2019adversarial, pang2020bag, wu2020adversarial, zhang2020geometry, tack2021consistency, wang2021probabilistic}. Usually, adversarial training using a defense budget $\epsilon$ is expected to improve model robustness against $\epsilon$-bounded adversarial examples. Thus, to break this defense, a direct way is to enlarge the typical $\epsilon$-ball used to constrain the attack; however, this may risk changing the true label~\cite{carlini2019critique, tramer2020fundamental}. In this work, we aim to show that it is possible to achieve this by slightly perturbing the training data without enlarging the $\epsilon$-ball.

\paragraph{Data poisoning.}
Data poisoning attacks, which manipulate training data to cause the resulting models to fail during inference~\cite{biggio2018wild}, can be divided into \textit{availability attacks} (to degrade overall test performance)~\cite{biggio2012poisoning, xiao2015feature, munoz2017towards, pang2021accumulative, fowl2021adversarial} and \textit{integrity attacks} (to cause specific misclassifications)~\cite{koh2017understanding, chen2017targeted, shafahi2018poison, zhu2019transferable, geiping2020witches, schwarzschild2021just}.
While the stability attacks studied in this work may be reminiscent of \textit{backdoor attacks}~\cite{chen2017targeted}, we note that they share several key differences.
First, stability attacks aim to hinder adversarial training with well-defined $\epsilon$-robustness, while backdoor attacks mainly focus on embedding malicious behaviors (that can be invoked by pre-specified triggers) into naturally trained models~\cite{goldblum2020dataset, saha2020hidden, turner2019label}.
Second, stability attacks only perturb the inputs slightly, while most works on backdoor attacks require mislabeling~\cite{gu2017badnets, liu2017trojaning, nguyen2020input, li2021invisible, wu2022backdoorbench}. 
Thus, backdoor defenses~\cite{borgnia2021strong, wu2021Adversarial, li2021anti} might not be directly applied to resist stability attacks. 

Additional related works are discussed in Appendix~\ref{app:additional-related-work}.

\section{Conclusion}
\label{sec:conclusion}

In this work, we establish a framework to study the robustness of adversarial training against stability attacks. 
We unveil the threat of stability attacks---small hypocritical perturbations applied into the training data suffice to hinder conventional adversarial training. The conventional defense budget $\epsilon$ is insufficient under the threat. To resist it, we suggest a larger defense budget of no more than $2\epsilon$. Our theoretical analysis explains why hypocritical perturbations are effective as stability attacks---they can mislead the learner by reinforcing the non-robust features. Experiments demonstrate that hypocritical perturbations are harmful to conventional adversarial training on benchmark datasets, and enlarging the defense budget is essential for mitigating hypocritical pertubrations.
Future work includes relaxing the assumption that the adversary perturbs the entire training set and designing more effective stability attacks against adversarial training.



\small
\bibliographystyle{plainnat}
\bibliography{ref}

\newpage
\appendix
\begin{center}{\bf {\Large Supplementary Material: Can Adversarial Training Be Manipulated By Non-Robust Features?}}
\end{center}
\vspace{0.1in}


\section{Additional Related Work}
\label{app:additional-related-work}

In this part, we discuss several independent (or concurrent) works that are closely related to this work.

\citet{zhu2021understanding} study the effect of conventional adversarial training on differentiating noisy labels, while \citet{zhang2021noilin} show that deliberately injected noisy labels may serve as a regularization that alleviates robust overfitting. Our results focus on the clean-label setting and provide evidence that conventional adversarial training can be hindered without modifying the labels.

\citet{yu2021indiscriminate} suggest explaining the success of availability attacks from the perspective of shortcuts. They further adopt pre-trained models to extract useful features for mitigating model reliance on the shortcuts. This direction is orthogonal to ours.

\citet{liu2021going} improve the effectiveness of unlearnable examples~\cite{huang2021unlearnable} by generating grayscale perturbations and using data augmentations. They also conclude that conventional adversarial training will prevent a drop in accuracy measured both on clean images and adversarial images. Contrary to them, we show that, both theoretically and empirically, conventional adversarial training can be hindered by hypocritical perturbations, and we further analyze the necessity of enlarging the defense budget to resist stability attacks.

\citet{gao2022does} revisit the trade-off between adversarial robustness and backdoor robustness~\cite{weng2020trade}. 
They conclude that backdoor attacks are ineffective when the defense budget of adversarial training surpasses the trigger magnitude.
In contrast, our results indicate that stability attacks are still harmful to adversarial training when the defense budget is not large enough. In a simple statistical setting, a defense budget $\epsilon + \eta$ is necessary (where $\eta$ is a positive number). In the general case, a defense budget of $2\epsilon$ is sufficient. In our experiments, a defense budget of about $1.5\epsilon \sim 1.75\epsilon$ provides the best empirical $\epsilon$-robustness.

\citet{wang2022fooling} argue that it is necessary to use robust features for compromising adversarial training. To this end, they adopt a relatively large attack budget $\epsilon_a=32/255$ for crafting their poisons (they use one type of adversarial perturbations), and show that their poisons can decrease the performance of the models trained using smaller defense budgets (such as $\epsilon_d=8/255$ and $\epsilon_d=16/255$). In contrast, we focus on a more realistic setting that does not require a larger attack budget. We demonstrate that it is possible to hinder adversarial training when $\epsilon_a=\epsilon_d$. Furthermore, we provide both theoretical and empirical results showing how to adapt the defense to maintain robustness.

\citet{fu2022robust} explore how to protect data privacy against adversarial training. The main purpose of their poisons is to compromise adversarial training by requiring the perturbation budget of their poisons to be larger than that of adversarial training. In this way, they show that the natural accuracy of the adversarially trained models can be largely decreased, let alone robust accuracy. From this perspective, our work is complementary to theirs. We pursue to not increase the attack budget of stability attacks, keeping it as small as possible. We successfully demonstrate that stability attacks are still harmful to conventional adversarial training without enlarging the attack budget. This makes the threat of stability attacks more insidious than that of~\citet{fu2022robust}.

On the other hand, we find that our implementation of stability attacks using hypocritical perturbations has some similarities to the robust unlearnable examples in~\citet{fu2022robust}. Specifically, although the robust unlearnable examples are generated via a complicated min-min-max optimization process~\cite{fu2022robust}, we notice that their noise generator can be viewed as an adversarially trained model. This implies that the robust error-minimizing (REM) noise~\cite{fu2022robust} might be useful in demonstrating the feasibility of stability attacks. To verify this, we run the source code from the authors with default hyperparameters, and compare our crafted hypocritical perturbation with their generated noise under the setting of stability attacks. For a fair comparison, here we apply a very simple trick called EOT~\cite{athalye2018synthesizing} in our method, since the trick is also used by REM~\cite{fu2022robust}. The additional time cost of the EOT trick is very small and negligible. 

Our experimental results, shown in~\cref{tab:bench-attack-REM}, demonstrate that the robust error-minimizing noise is also effective as stability attacks, though it was originally proposed as a delusive attack. It is noteworthy that the robust accuracy is not evaluated in~\cite{fu2022robust}. In this sense, the effectiveness of REM as an stability attack can be regarded as one of our novel findings. 
Importantly, our method outperforms REM in terms of the robust accuracy against AutoAttack. Since AutoAttack is the most reliable evaluation metric of model robustness among the test-time attacks~\cite{croce2020reliable}, this indicates that our method is reliably more effective than REM in degrading model robustness. 
It is also noteworthy that our method is significantly more efficient than REM, as shown in~\cref{tab:time-cost-comparison}. We note that the efficiency of our method is largely due to the fact that our crafting model is fast to train. Specifically, the time cost of training our crafting model is only 0.3 hours, while it takes 20.8 hours for REM. That is, our crafting model is nearly 70 times faster to train than that of REM. In short, our method is not only more effective, but also more efficient, than REM as a stability attack.

\begin{table*}[!h]
  \centering
  \caption{Comparison with REM~\cite{fu2022robust}: Test robustness (\%) of PGD-AT using a defense budget $\epsilon_d=8/255$ on CIFAR-10. We report mean and standard deviation over 3 random runs.}
  \label{tab:bench-attack-REM}
  \begin{center}
  \begin{scriptsize}
  \begin{tabular}{@{}lcccccc@{}}
  \toprule
  Attack               & Natural            & FGSM             & PGD-20           & PGD-100          & CW$_\infty$      & AutoAttack       \\ \midrule
  None (clean)         & 82.17   $\pm$ 0.71 & 56.63 $\pm$ 0.54 & 50.63 $\pm$ 0.56 & 50.35 $\pm$ 0.59 & 49.37 $\pm$ 0.57 & 46.99 $\pm$ 0.62 \\
  DeepConfuse~\cite{feng2019learning}          & 81.25   $\pm$ 1.52 & 54.14 $\pm$ 0.63 & 48.25 $\pm$ 0.40 & 48.02 $\pm$ 0.40 & 47.34 $\pm$ 0.05 & 44.79 $\pm$ 0.36 \\
  Unlearnable Examples~\cite{huang2021unlearnable} & 83.67   $\pm$ 0.86 & 57.51 $\pm$ 0.31 & 50.74 $\pm$ 0.37 & 50.31 $\pm$ 0.38 & 49.81 $\pm$ 0.24 & 47.25 $\pm$ 0.32 \\
  NTGA~\cite{pmlr-v139-yuan21b}                 & 82.99   $\pm$ 0.40 & 55.71 $\pm$ 0.36 & 49.17 $\pm$ 0.27 & 48.82 $\pm$ 0.30 & 47.96 $\pm$ 0.16 & 45.36 $\pm$ 0.32 \\
  Adversarial Poisoning~\cite{fowl2021adversarial} &
    \textbf{77.35   $\pm$ 0.43} &
    53.93 $\pm$ 0.02 &
    49.95 $\pm$ 0.11 &
    49.76 $\pm$ 0.08 &
    48.35 $\pm$ 0.04 &
    46.13 $\pm$ 0.18 \\
  REM~\cite{fu2022robust} &
    85.63   $\pm$ 1.05 &
    \textbf{42.86 $\pm$ 1.09} &
    {35.40 $\pm$ 0.04} &
    {35.11 $\pm$ 0.09} &
    \textbf{35.24 $\pm$ 0.33} &
    {33.09 $\pm$ 0.24} \\
      Hypocritical Perturbation (ours) &
        87.60   $\pm$ 0.45 &
        {45.00 $\pm$ 0.77} &
        \textbf{34.89 $\pm$ 0.36} &
        \textbf{34.27 $\pm$ 0.36} &
        {36.28 $\pm$ 0.38} &
        \textbf{32.79 $\pm$ 0.37} \\ \bottomrule
  \end{tabular}
  \end{scriptsize}
  \end{center}
  \magic
  \end{table*}
  
  \begin{table*}[!h]
  \centering
  \caption{Comparison with REM~\cite{fu2022robust}: Time cost (min) of poisoning CIFAR-10.}
  \label{tab:time-cost-comparison}
  \begin{center}
  \begin{small}
  \begin{tabular}{@{}lrrr@{}}
  \toprule
  Method & Training the crafting model & Perturbation generation & Total \\ \midrule
  REM~\cite{fu2022robust}   & 1252.4 & 98.1 & 1350.5  \\
  Hypocritical Perturbation (ours)   & \textbf{18.5} & \textbf{17.3} & \textbf{35.8}  \\ \bottomrule
  \end{tabular}
  \end{small}
  \end{center}
  \magic
  \end{table*}

\section{Omitted Tables}
\label{appendix.figures}

\begin{table*}[!h]
  \centering
  \caption{Full table of~\cref{tab:bench-attack}: Test robustness (\%) of PGD-AT using a defense budget $\epsilon_d=8/255$ on CIFAR-10. We report mean and standard deviation over 3 random runs.}
  \label{tab:bench-attack-std}
  \begin{center}
  \begin{scriptsize}
  \begin{tabular}{@{}lcccccc@{}}
  \toprule
  Attack               & Natural            & FGSM             & PGD-20           & PGD-100          & CW$_\infty$      & AutoAttack       \\ \midrule
  None (clean)         & 82.17   $\pm$ 0.71 & 56.63 $\pm$ 0.54 & 50.63 $\pm$ 0.56 & 50.35 $\pm$ 0.59 & 49.37 $\pm$ 0.57 & 46.99 $\pm$ 0.62 \\
  DeepConfuse~\cite{feng2019learning}          & 81.25   $\pm$ 1.52 & 54.14 $\pm$ 0.63 & 48.25 $\pm$ 0.40 & 48.02 $\pm$ 0.40 & 47.34 $\pm$ 0.05 & 44.79 $\pm$ 0.36 \\
  Unlearnable Examples~\cite{huang2021unlearnable} & 83.67   $\pm$ 0.86 & 57.51 $\pm$ 0.31 & 50.74 $\pm$ 0.37 & 50.31 $\pm$ 0.38 & 49.81 $\pm$ 0.24 & 47.25 $\pm$ 0.32 \\
  NTGA~\cite{pmlr-v139-yuan21b}                 & 82.99   $\pm$ 0.40 & 55.71 $\pm$ 0.36 & 49.17 $\pm$ 0.27 & 48.82 $\pm$ 0.30 & 47.96 $\pm$ 0.16 & 45.36 $\pm$ 0.32 \\
  Adversarial Poisoning~\cite{fowl2021adversarial} &
    \textbf{77.35   $\pm$ 0.43} &
    53.93 $\pm$ 0.02 &
    49.95 $\pm$ 0.11 &
    49.76 $\pm$ 0.08 &
    48.35 $\pm$ 0.04 &
    46.13 $\pm$ 0.18 \\
  Hypocritical Perturbation (ours) &
    88.07   $\pm$ 1.10 &
    \textbf{47.93 $\pm$ 1.88} &
    \textbf{37.61 $\pm$ 0.77} &
    \textbf{36.96 $\pm$ 0.61} &
    \textbf{38.58 $\pm$ 1.15} &
    \textbf{35.44 $\pm$ 0.77} \\ \bottomrule
  \end{tabular}
  \end{scriptsize}
  \end{center}
  \magic
  \end{table*}
  
  \begin{table*}[!h]
  \centering
  \caption{Full table of~\cref{tab:bench-attack-datasets}: Test robustness (\%) of PGD-AT using a defense budget $\epsilon_d=8/255$ across different datasets. We report mean and standard deviation over 3 random runs.}
  \label{tab:bench-attack-datasets-std}
  \begin{center}
  \begin{scriptsize}
  \begin{tabular}{@{}llcccccc@{}}
  \toprule
  Dataset & Attack & Natural & FGSM & PGD-20 & PGD-100 & CW$_\infty$ & AutoAttack \\ \midrule
  \multirow{3}{*}{SVHN} & None & 93.95   $\pm$ 0.21 & 71.83 $\pm$ 1.10 & 57.15 $\pm$ 0.31 & 56.02 $\pm$ 0.33 & 54.93 $\pm$ 0.19 & 50.50 $\pm$ 0.44 \\
   & Adv. & \textbf{87.50   $\pm$ 0.30} & \textbf{56.12 $\pm$ 0.33} & 46.71 $\pm$ 0.25 & 46.32 $\pm$ 0.26 & 45.70 $\pm$ 0.27 & 42.48 $\pm$ 0.21 \\
   & Hyp. & 96.06   $\pm$ 0.01 & 59.41 $\pm$ 0.07 & \textbf{38.17 $\pm$ 0.19} & \textbf{37.29 $\pm$ 0.21} & \textbf{40.54 $\pm$ 0.27} & \textbf{35.43 $\pm$ 0.29} \\ \midrule
  \multirow{3}{*}{CIFAR-100} & None & 56.15   $\pm$ 0.17 & 31.50 $\pm$ 0.16 & 28.38 $\pm$ 0.39 & 28.28 $\pm$ 0.40 & 26.53 $\pm$ 0.27 & 24.30 $\pm$ 0.31 \\
   & Adv. & \textbf{52.14   $\pm$ 0.34} & 28.59 $\pm$ 0.12 & 26.19 $\pm$ 0.11 & 26.09 $\pm$ 0.12 & 24.36 $\pm$ 0.09 & 22.71 $\pm$ 0.11 \\
   & Hyp. & 62.22   $\pm$ 0.11 & \textbf{26.38 $\pm$ 0.11} & \textbf{21.51 $\pm$ 0.06} & \textbf{21.13 $\pm$ 0.02} & \textbf{21.13 $\pm$ 0.23} & \textbf{18.74 $\pm$ 0.10} \\ \midrule
  \multirow{3}{*}{Tiny-ImageNet} & None & \textbf{49.34   $\pm$ 2.61} & 25.67 $\pm$ 0.92 & 22.99 $\pm$ 0.37 & 22.86 $\pm$ 0.36 & 20.67 $\pm$ 0.69 & 18.54 $\pm$ 0.61 \\
   & Adv. & 49.52   $\pm$ 0.19 & 22.93 $\pm$ 0.38 & 20.01 $\pm$ 0.24 & 19.91 $\pm$ 0.24 & 18.75 $\pm$ 0.19 & 16.83 $\pm$ 0.25 \\
   & Hyp. & 55.92   $\pm$ 1.95 & \textbf{20.21 $\pm$ 0.84} & \textbf{15.61 $\pm$ 0.31} & \textbf{15.26 $\pm$ 0.26} & \textbf{14.99 $\pm$ 0.73} & \textbf{12.53 $\pm$ 0.57} \\ \bottomrule
  \end{tabular}
  \end{scriptsize}
  \end{center}
  \magic
  \end{table*}

  \begin{table*}[!h]
  \centering
  \caption{Full table of~\cref{tab:bench-defense}: Test robustness (\%) of various adaptive defenses on the hypocritically perturbed CIFAR-10. We report mean and standard deviation over 3 random runs.}
  \label{tab:bench-defense-std}
  \begin{center}
  \begin{scriptsize}
  \begin{tabular}{@{}lcccccc@{}}
  \toprule
  Defense                & Natural                     & FGSM             & PGD-20           & PGD-100          & CW$_\infty$      & AutoAttack       \\ \midrule
  PGD-AT ($\epsilon_d=8/255$)  & 88.07   $\pm$ 1.10          & 47.93 $\pm$ 1.88 & 37.61 $\pm$ 0.77 & 36.96 $\pm$ 0.61 & 38.58 $\pm$ 1.15 & 35.44 $\pm$ 0.77 \\
  + Random Noise         & 87.62   $\pm$ 0.07          & 47.46 $\pm$ 0.08 & 38.35 $\pm$ 0.08 & 37.90 $\pm$ 0.07 & 39.07 $\pm$ 0.20 & 36.25 $\pm$ 0.14 \\
  + Gaussian Smoothing   & 83.95   $\pm$ 0.27          & 50.96 $\pm$ 0.24 & 42.80 $\pm$ 0.40 & 42.34 $\pm$ 0.38 & 42.41 $\pm$ 0.19 & 40.07 $\pm$ 0.29 \\
  + Cutout               & \textbf{88.26   $\pm$ 0.15} & 49.23 $\pm$ 0.42 & 39.77 $\pm$ 0.26 & 39.25 $\pm$ 0.25 & 40.38 $\pm$ 0.25 & 37.61 $\pm$ 0.35 \\
  + AutoAugment          & 86.24   $\pm$ 1.14          & 48.87 $\pm$ 1.01 & 40.19 $\pm$ 0.67 & 39.65 $\pm$ 0.72 & 37.66 $\pm$ 0.88 & 35.07 $\pm$ 0.88 \\
  PGD-AT ($\epsilon_d=14/255$) &
   80.00   $\pm$ 1.91 &
   56.86 $\pm$ 1.42 &
   52.92 $\pm$ 0.86 &
   52.83 $\pm$ 0.86 &
   \textbf{50.36 $\pm$ 1.11} &
   \textbf{48.63 $\pm$ 0.93} \\
  TRADES ($\epsilon_d=12/255$) & 79.63   $\pm$ 0.06          & 55.73 $\pm$ 0.04 & 51.77 $\pm$ 0.15 & 51.63 $\pm$ 0.15 & 48.68 $\pm$ 0.06 & 47.83 $\pm$ 0.02 \\
  MART ($\epsilon_d=14/255$) &
   77.29   $\pm$ 0.87 &
   \textbf{57.10 $\pm$ 0.57} &
   \textbf{53.82 $\pm$ 0.36} &
   \textbf{53.71 $\pm$ 0.34} &
   49.03 $\pm$ 0.47 &
   47.67 $\pm$ 0.51 \\ \bottomrule
  \end{tabular}
  \end{scriptsize}
  \end{center}
  \end{table*}


\section{Proofs}

In this section, we provide the proofs of our theoretical results in~\cref{sec:how-to-manipulate} and~\cref{sec:necessity-large-budget}.

\subsection{Proof of~Proposition~\ref{thm:adv_accuracy}}
\label{app:proof-thm-adv_accuracy}

\textbf{Proposition~\ref{thm:adv_accuracy} (restated).}
\emph{
  Let $\epsilon=2\eta$ and denote by $\cA_{\textup{adv}}(f)$ the adversarial accuracy, i.e., the probability of a classifier correctly predicting $y$ on the data~(\ref{eq:mixGau}) under $\linf$ perturbations. Then, we have
  \begin{equation*}
  \begin{aligned}
      \cA_{\textup{adv}}(f_{\textup{nat}}) \le \Pr \left\{ \cN(0, 1) < \frac{1-d\eta^2}{\sigma \sqrt{1+d\eta^2}} \right\}, \quad \cA_{\textup{adv}}(f_{\textup{rob}}) = \Pr \left\{ \cN(0, 1) < \frac{1-2\eta}{\sigma} \right\}.
  \end{aligned}
  \end{equation*}
}

\begin{proof}
Recalling that in~\cref{eq:std_classifier}, we have the natural classifier:
\begin{equation}
    f_{\nat}(\vx) \coloneqq \sign(\vw_{\nat}^{\top} \vx), \,\, \text{where } \vw_{\nat}\coloneqq[1, \eta, \ldots, \eta],
\end{equation}
and in~\cref{eq:rob_classifier}, the robust classifier is defined as:
\begin{equation}
    f_{\rob}(\vx) \coloneqq \sign(\vw_{\rob}^{\top} \vx), \,\, \text{where } \vw_{\rob}\coloneqq[1, 0, \ldots, 0].
\end{equation}

Then, the adversarial accuracy of the natural classifier on the data $\cD$~(\ref{eq:mixGau}) is
\begin{equation}
\begin{aligned}
    \cA_{\textup{adv}}(f_{\textup{nat}}) 
    &= 1 - \underset{(\vx, y) \sim \cD}{\Pr} \left\{\exists \|\vdelta\|_{\infty} \le \epsilon, f_{\nat}(\vx+\vdelta) \neq y \right\} \\
    &= 1 - \underset{(\vx, y) \sim \cD}{\Pr} \left\{\min_{\|\vdelta\|_{\infty} \le \epsilon} \left[y \cdot f_{\nat}(\vx+\vdelta)\right] < 0 \right\} \\
    &= 1 - \Pr \left\{\min_{\|\vdelta\|_{\infty} \le \epsilon} \left[ y \cdot \left( 1 \cdot \left(\cN(y, \sigma^2) + \delta_1 \right) + \sum_{i=2}^{d+1} \eta \cdot \left(\cN(y\eta, \sigma^2) + \delta_{i} \right) \right) \right] < 0 \right\} \\
    &\le 1 - \Pr \left\{y \cdot \left( 1 \cdot \left(\cN(y, \sigma^2) \right) + \sum_{i=2}^{d+1} \eta \cdot \left(\cN(y\eta, \sigma^2) - \epsilon \right) \right) < 0 \right\} \\
    &= 1 - \Pr \left\{\cN(1, \sigma^2) + \eta \sum_{i=2}^{d+1} \cN(\eta - \epsilon, \sigma^2) < 0 \right\} \\
    &= \Pr \left\{\cN(1, \sigma^2) + \eta \sum_{i=2}^{d+1} \cN(\eta - \epsilon, \sigma^2) > 0 \right\} \\
    &= \Pr \left\{\cN(0, 1) < \frac{1 - d\eta^2}{\sigma\sqrt{1+d\eta^2}} \right\}. \\
\end{aligned}
\end{equation}
Similarly, the adversarial accuracy of the robust classifier on the data $\cD$~(\ref{eq:mixGau}) is
\begin{equation}
\begin{aligned}
    \cA_{\textup{adv}}(f_{\textup{rob}}) 
    &= 1 - \underset{(\vx, y) \sim \cD}{\Pr} \left\{\exists \|\vdelta\|_{\infty} \le \epsilon, f_{\rob}(\vx+\vdelta) \neq y \right\} \\
    &= 1 - \underset{(\vx, y) \sim \cD}{\Pr} \left\{\min_{\|\vdelta\|_{\infty} \le \epsilon} [y \cdot f_{\rob}(\vx+\vdelta)] < 0 \right\} \\
    &= 1 - \Pr \left\{\min_{\|\vdelta\|_{\infty} \le \epsilon} \left[ y \cdot \left( 1 \cdot \left(\cN(y, \sigma^2) + \delta_1 \right) \right) \right] < 0 \right\} \\
    &= 1 - \Pr \left\{\min_{\|\vdelta\|_{\infty} \le \epsilon}  \left[\cN(1, \sigma^2) + \delta_1 \right]  < 0 \right\} \\
    &= 1 - \Pr \left\{  \cN(1, \sigma^2) - \epsilon  < 0 \right\} \\
    &= \Pr \left\{  \cN(1 - \epsilon, \sigma^2)  > 0 \right\} \\
    &= \Pr \left\{\cN(0, 1) < \frac{1 - 2\eta}{\sigma} \right\}. \\
\end{aligned}
\end{equation}
\end{proof}

\subsection{Proof of~\cref{thm:adv-harmless}}
\label{app:proof-thm-adv-harmless}

The following theorems rely on the analytical solution of optimal linear $\linf$-robust classifier on mixture Gaussian distributions. Concretely, the optimization problem is to minimize the adversarial risk on a distribution $\widehat{\cD}$ with a defense budget $\hat{\epsilon}$:
\begin{equation}
\label{equa.linear_adv_risk}
    \min_{f} \cR_{\adv}^{\hat{\epsilon}}(f, \widehat{\cD}), \quad \textup{where} \quad \cR_{\adv}^{\hat{\epsilon}}(f, \widehat{\cD}) \coloneqq \underset{(\boldsymbol{x}, y) \sim \widehat{\mathcal{D}}}{\mathbb{E}} \left[ \max_{\|\boldsymbol{\xi}\|_{\infty} \leq \hat{\epsilon}} \mathds{1} \left( \operatorname{sign}(\boldsymbol{w}^{\top} (\boldsymbol{x} + \boldsymbol{\xi}) + b) \neq y \right) \right],
\end{equation}
where $f(\vx) = \sign(\vw^{\top}\vx + b)$, and $\ind(\cdot)$ denotes the indicator function.

We note that optimal linear robust classifiers have been obtained for certain data distributions in previous work~\cite{tsipras2018robustness, ilyas2019adversarial, dobriban2020provable, javanmard2020precise, xu2021robust, tao2021provable}. Here, our goal is to establish similar optimal linear robust classifiers for the classification tasks in our setting. We only employ linear classifiers, since it is highly nontrivial to consider non-linearity for adversarial training on mixture Gaussian distributions~\cite{dobriban2020provable}.

\begin{lemma}
\label{lemma.b1}
Assume that the adversarial perturbation in data $\cT_{\adv}$~(\ref{eq:mixGau_adv}) is moderate such that $\eta/2 \le \epsilon < 1/2$. Then, minimizing the adversarial risk (\ref{equa.linear_adv_risk}) on the data $\cT_{\adv}$ with a defense budget $\epsilon$ can result in a classifier that assigns $0$ weight to the features $x_i$ for $i \geq 2$.
\end{lemma}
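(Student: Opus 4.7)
The plan is to solve the constrained optimization for the best linear $\linf$-robust classifier $f(\vx) = \sign(\vw^{\top}\vx + b)$ on $\cT_{\adv}$~(\ref{eq:mixGau_adv}) and read off that the non-robust coordinates receive zero weight. First, $\cT_{\adv}$ is invariant under the joint flip $(\vx, y) \mapsto (-\vx, -y)$, which (by the standard Bayes-optimal argument for symmetric class-conditional distributions) lets me take $b = 0$; and since the 0-1 adversarial risk is invariant under positive rescaling of $\vw$, I normalize $\|\vw\|_2 = 1$. Using the $\linf$ duality $\min_{\|\vxi\|_{\infty} \le \epsilon} y\,\vw^{\top}\vxi = -\epsilon\|\vw\|_1$, the adversarial risk becomes $\Pr_{(\vx,y)\sim \cT_{\adv}}\!\bigl[\,y\,\vw^{\top}\vx \le \epsilon\|\vw\|_1\,\bigr]$.

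Under $\cT_{\adv}$, the random variable $y\,\vw^{\top}\vx$ is Gaussian with variance $\sigma^2\|\vw\|_2^2 = \sigma^2$ and mean $\mu(\vw) = (1-\epsilon)\,w_1 + (\eta-\epsilon)\sum_{i=2}^{d+1} w_i$, so minimizing the risk is equivalent to minimizing
\[
N(\vw) \;:=\; \epsilon\,\|\vw\|_1 \;-\; \mu(\vw) \quad \text{subject to} \quad \|\vw\|_2 = 1.
\]
Without loss of generality $w_1 \ge 0$: flipping its sign leaves $\|\vw\|_1$ unchanged while decreasing $\mu(\vw)$ (hence increasing $N$), because $1-\epsilon > 0$. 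Rewriting
\[
N(\vw) \;=\; (2\epsilon - 1)\,w_1 \;+\; \sum_{i=2}^{d+1}\bigl[\,\epsilon\,|w_i| - (\eta-\epsilon)\,w_i\,\bigr],
\]
a coordinate-wise inspection shows that for each $i \ge 2$ the summand equals $(2\epsilon - \eta)\,w_i \ge 0$ if $w_i \ge 0$ and $-\eta\,w_i > 0$ if $w_i < 0$, where nonnegativity uses precisely the hypothesis $\epsilon \ge \eta/2$.

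Putting it together: $(2\epsilon - 1)w_1 \le 0$ since $\epsilon < 1/2$, and it is minimized (most negative) by taking $w_1$ as large as possible, which under $\|\vw\|_2 = 1$ means $w_1 = 1$; this choice is feasible exactly when $w_i = 0$ for all $i \ge 2$, and it simultaneously annihilates every nonnegative summand. Hence $\vw = \vw_{\rob} = [1, 0, \ldots, 0]$ globally minimizes $N$, yielding an optimal robust classifier that assigns zero weight to $x_2, \ldots, x_{d+1}$. The main obstacle is handling the joint interplay between the $\ell_1$-penalty and the Gaussian mean term; the reason the argument succeeds is the sign structure forced by $\epsilon \ge \eta/2$, which makes every non-robust coordinate contribute a pure penalty rather than a profit to the adversary's objective. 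The boundary case $\epsilon = \eta/2$ is handled identically, since even with a vanishing coefficient on $w_i \ge 0$, the $\|\vw\|_2 = 1$ constraint forces $w_1 < 1$ whenever any $w_i \ne 0$, strictly worsening the negative first term.
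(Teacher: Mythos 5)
Your proof is correct, and it reaches the conclusion by a genuinely different route than the paper. The paper argues by contradiction, one coordinate at a time: assuming an optimal $\vw$ has $w_i \neq 0$ for some $i \ge 2$, it isolates that coordinate's contribution to the inner maximization (a Gaussian term with mean $2\epsilon-\eta \ge 0$ after setting $\delta_i=\epsilon$) and shows that zeroing $w_i$ cannot increase the risk, carrying the bias $b$ and the other coordinates inside the probability throughout. You instead solve the problem globally: after fixing $b=0$ by symmetry and normalizing $\|\vw\|_2=1$, the $\ell_1$ duality collapses the adversarial risk to $\Phi\bigl((\epsilon\|\vw\|_1-\vw^{\top}\vmu)/\sigma\bigr)$ with $\vmu=(1-\epsilon,\eta-\epsilon,\ldots,\eta-\epsilon)$, and a coordinate-wise sign analysis of $\epsilon\|\vw\|_1-\vw^{\top}\vmu$ identifies $\vw=[1,0,\ldots,0]$ as the minimizer. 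Your version buys several things: it delivers the full optimizer in one pass (so it subsumes \cref{lemma.b2} and the concluding computation in the proof of \cref{thm:adv-harmless}), it handles the boundary case $\epsilon=\eta/2$ cleanly via the norm constraint, and it sidesteps a subtlety the paper elides --- the claim that adding a nonnegative-mean Gaussian term to $\bbA$ strictly increases $\Pr\{\cdot>0\}$ requires knowing that $\bbA$ is Gaussian with negative mean at any candidate optimum, which the paper does not state; your reduction to the monotone function $\Phi$ avoids this entirely. The one step you assert rather than prove is that $b=0$ is optimal: the symmetry $(\vx,y)\mapsto(-\vx,-y)$ only shows the risk is an even function of $b$, and you additionally need unimodality in $b$, which holds here because the risk equals $\tfrac{1}{2}\bigl[\Phi((b-m)/\sigma)+\Phi((-b-m)/\sigma)\bigr]$ with $m=\vw^{\top}\vmu-\epsilon\|\vw\|_1>0$ at the optimum. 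This is standard and easily filled, so it is not a genuine gap.
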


\begin{proof}
We prove the lemma by contradiction. 

The goal is to minimize the adversarial risk on the distribution $\cT_{\adv}$, which can be written as follows:

\begin{equation}
\label{eq:adfhaksdfghok}
\begin{aligned}
    \cR_{\adv}^{\epsilon}(f, \cT_{\adv})
    =& \underset{(\vx, y) \sim \cT_{\adv}}{\Pr} \left\{\exists \|\vdelta\|_{\infty} \le \epsilon, f(\vx+\vdelta) \neq y \right\} \\
    =& \underset{(\vx, y) \sim \cT_{\adv}}{\Pr} \left\{ \min_{\|\vdelta\|_{\infty} \le \epsilon} \left[y \cdot f(\vx+\vdelta)\right] < 0 \right\} \\
    =& \underset{(\vx, y) \sim \cT_{\adv}}{\Pr} \left\{ \max_{\|\vdelta\|_{\infty} \le \epsilon} \left[f(\vx+\vdelta)\right] > 0 \ | \ y = -1\right\} \cdot \underset{(\vx, y) \sim \cT_{\adv}}{\Pr} \left\{ y = -1 \right\} \\
    &+ \underset{(\vx, y) \sim \cT_{\adv}}{\Pr} \left\{ \min_{\|\vdelta\|_{\infty} \le \epsilon} \left[f(\vx+\vdelta)\right] < 0 \ | \ y = +1\right\} \cdot \underset{(\vx, y) \sim \cT_{\adv}}{\Pr} \left\{ y = +1 \right\} \\
    =& \underbrace{\Pr\left\{ \max_{\|\vdelta\|_{\infty} \le \epsilon} \left[ w_1(\cN(\epsilon-1, \sigma^2) + \delta_1) + \sum_{i=2}^{d+1} w_i (\cN(\epsilon-\eta, \sigma^2) + \delta_i) + b \right] > 0 \right\}}_{\cR_{\adv}^{\epsilon}(f, \cT_{\adv}^{(-1)})} \cdot \frac{1}{2} \\
    &+ \underbrace{\Pr\left\{ \min_{\|\vdelta\|_{\infty} \le \epsilon} \left[ w_1(\cN(1-\epsilon, \sigma^2) + \delta_1) + \sum_{i=2}^{d+1} w_i (\cN(\eta-\epsilon, \sigma^2) + \delta_i) + b \right] < 0 \right\}}_{\cR_{\adv}^{\epsilon}(f, \cT_{\adv}^{(+1)})} \cdot \frac{1}{2} \\
\end{aligned}
\end{equation}

Consider an optimal solution $\vw$ in which $w_i > 0$ for some $i\ge2$. Then, we have
\begin{equation}
\begin{aligned}
    \cR_{\adv}^{\epsilon}(f, \cT_{\adv}^{(-1)}) = \Pr\left\{ \underbrace{\sum_{j \neq i} \max_{\|\delta_j\| \le \epsilon} \left[ w_j (\cN(\epsilon - [\vw_{\nat}]_j, \sigma^2) + \delta_j) + b \right]}_{\bbA} + \underbrace{\max_{\|\delta_i\| \le \epsilon} \left[ w_i (\cN(\epsilon - \eta, \sigma^2) + \delta_i) \right]}_{\bbB} > 0 \right\}, \\
\end{aligned}
\end{equation}
where $\vw_{\nat} \coloneqq[1, \eta, \ldots, \eta]$ as in~\cref{eq:std_classifier}. Since $w_i > 0$, $\bbB$ is maximized when $\delta_i = \epsilon$. Thus, the contribution of terms depending on $w_i$ to $\bbB$ is a normally-distributed random variable with mean $2\epsilon - \eta$. Since $2\epsilon - \eta \ge 0$, setting $w_i$ to zero can only decrease the risk. This contradicts the optimality of $\vw$. Formally,
\begin{equation}
\begin{aligned}
    \cR_{\adv}^{\epsilon}(f, \cT_{\adv}^{(-1)}) = \Pr\left\{ \bbA + w_i \cN(2\epsilon-\eta, \sigma^2) > 0 \right\} > \Pr\left\{\bbA > 0\right\}.
\end{aligned}
\end{equation}
We can also assume $w_i<0$ and similar contradiction holds. Therefore, minimizing the adversarial risk on $\cT_{\adv}$ leads to $w_i=0$ for $i\ge 2$.
\end{proof}

\begin{lemma}
\label{lemma.b2}
Assume that the adversarial perturbation in data $\cT_{\adv}$~(\ref{eq:mixGau_adv}) is moderate such that $\eta/2 \le \epsilon < 1/2$. Then, minimizing the adversarial risk (\ref{equa.linear_adv_risk}) on the data $\cT_{\adv}$ with a defense budget $\epsilon$ results in a classifier that assigns a positive weight to the feature $x_1$.
\end{lemma}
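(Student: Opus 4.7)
The plan is to proceed by contradiction, using Lemma B1 to first reduce the optimization to a one-dimensional problem in $w_1$ and $b$, and then to rule out the cases $w_1 \le 0$ by exhibiting a strictly better choice of weights.

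Step one: apply Lemma B1 so that any optimal linear classifier $f(\vx) = \sign(\vw^\top \vx + b)$ on $\cT_{\adv}$ satisfies $w_i = 0$ for all $i \ge 2$. With this simplification, the adversarial risk becomes
\begin{equation}
\cR_{\adv}^{\epsilon}(f, \cT_{\adv}) = \tfrac{1}{2}\Pr\!\left\{\max_{|\delta_1|\le \epsilon} w_1(\cN(\epsilon-1,\sigma^2)+\delta_1) + b > 0\right\} + \tfrac{1}{2}\Pr\!\left\{\min_{|\delta_1|\le \epsilon} w_1(\cN(1-\epsilon,\sigma^2)+\delta_1) + b < 0\right\},
\end{equation}
so the problem reduces to choosing the sign of $w_1$ together with $b$.

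Step two: dispose of the non-positive cases. If $w_1 = 0$, then $f$ is the constant classifier $\sign(b)$, whose adversarial risk is exactly $1/2$ since $y$ is Rademacher. If $w_1 < 0$, I would observe that replacing $(w_1, b)$ by $(-w_1, -b)$ flips the predictions and, by the class-symmetry of $\cT_{\adv}$, transforms the two terms into each other with only the sign of the mean of $x_1$ being relevant. Concretely, the negative-$w_1$ classifier predicts $+1$ on examples whose $x_1$ is concentrated near $-(1-\epsilon)$, so its risk is at least $1/2$ (in fact strictly larger when $\epsilon < 1/2$). Thus both $w_1 \le 0$ choices are dominated by something with risk $\le 1/2$.

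Step three: exhibit a concrete $w_1 > 0$ whose adversarial risk is strictly below $1/2$, which by the previous step forces the optimum to have $w_1 > 0$. Take $w_1 = 1$ and $b = 0$; the adversary's worst perturbation shifts the mean of $x_1$ by $\epsilon$ toward the opposite class, yielding
\begin{equation}
\cR_{\adv}^{\epsilon}(f, \cT_{\adv}) = \Pr\!\left\{\cN(1-2\epsilon,\sigma^2) < 0\right\} = \Pr\!\left\{\cN(0,1) > \tfrac{1-2\epsilon}{\sigma}\right\},
\end{equation}
and since the hypothesis $\epsilon < 1/2$ gives $1 - 2\epsilon > 0$, this probability is strictly less than $1/2$. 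Therefore no optimizer can have $w_1 \le 0$, which proves the claim.

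The main obstacle is the middle step: care is needed to argue cleanly that $w_1 < 0$ is strictly worse than its sign-flipped counterpart, since for a given $b$ the inner maximizations over $\delta_1$ have signs that depend on $w_1$. I would handle this by explicitly performing the substitution $w_1 \mapsto -w_1$, $b \mapsto -b$ and using the class symmetry of the distribution to rewrite the risk as that of a positive-weight classifier on the same $\cT_{\adv}$, so that the comparison reduces to the (already controlled) positive-weight case plus the observation that $1 - 2\epsilon > 0$.
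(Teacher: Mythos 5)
Your proof is correct, but it takes a genuinely different route from the paper's. The paper proves \cref{lemma.b2} independently of \cref{lemma.b1}: it keeps the full weight vector, isolates the single term $\max_{|\delta_1|\le\epsilon} w_1(\cN(\epsilon-1,\sigma^2)+\delta_1)$ inside the $y=-1$ conditional risk, observes that for $w_1\le 0$ the worst-case $\delta_1=-\epsilon$ makes this term a Gaussian $w_1\cN(-1,\sigma^2)$ with nonnegative mean, and argues that replacing $w_1$ by any positive $p$ strictly decreases the exceedance probability --- a local sign-flip-one-coordinate argument. You instead first invoke \cref{lemma.b1} to collapse the problem to one dimension in $(w_1,b)$, then run a global value comparison: every classifier with $w_1\le 0$ has adversarial risk at least $1/2$ (the $w_1=0$ case is the constant classifier; the $w_1<0$ case reduces, after solving the inner max/min, to $\tfrac12[\Pr\{\cN(1,\sigma^2)>-s\}+\Pr\{\cN(1,\sigma^2)>s\}]>\tfrac12$ for $s=b/|w_1|$), while the explicit choice $w_1=1,b=0$ attains risk $\Pr\{\cN(1-2\epsilon,\sigma^2)<0\}<\tfrac12$ using $\epsilon<1/2$. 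Your version is arguably tighter where the paper is loosest: the paper compares $\Pr\{\bbC+w_1\cN(-1,\sigma^2)>0\}$ against $\Pr\{\bbC+p\cN(-1,\sigma^2)>0\}$ for arbitrary $p>0$, where the two added Gaussians have different variances, so the claimed strict inequality really needs either $p=|w_1|$ or a stochastic-dominance argument that the paper elides; your closed-form $1/2$-threshold computation avoids this entirely. The trade-offs are that your argument inherits a dependency on \cref{lemma.b1} (harmless, since that lemma is proved independently), and your step-two symmetry heuristic should be replaced in a final write-up by the direct computation sketched above, since the inner optimization over $\delta_1$ changes direction with the sign of $w_1$ and the clean statement only emerges after that optimization is carried out explicitly.
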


\begin{proof}
We prove the lemma by contradiction. 

The goal is to minimize the adversarial risk on the distribution $\cT_{\adv}$, which has been written in~\cref{eq:adfhaksdfghok}.

Consider an optimal solution $\vw$ in which $w_1 \le 0$. Then, we have
\begin{equation}
\begin{aligned}
    \cR_{\adv}^{\epsilon}(f, \cT_{\adv}^{(-1)}) = \Pr\left\{ \underbrace{\sum_{j=2}^{d+1} \max_{\|\delta_j\| \le \epsilon} \left[ w_j (\cN(\epsilon - \eta, \sigma^2) + \delta_j) + b \right]}_{\bbC} + \underbrace{\max_{\|\delta_1\| \le \epsilon} \left[ w_1 (\cN(\epsilon - 1, \sigma^2) + \delta_1) \right]}_{\bbD} > 0 \right\}. \\
\end{aligned}
\end{equation}

Since $w_1 \le 0$, $\bbD$ is maximized when $\delta_1=-\epsilon$. Thus, the contribution of the term depending on $w_1$ to $\bbD$ is a normally-distributed random variable with mean $-1$. Since the mean is negative, setting $w_1$ to be positive can decrease the risk. This contradicts the optimality of $\vw$. Formally,
\begin{equation}
\begin{aligned}
    \cR_{\adv}^{\epsilon}(f, \cT_{\adv}^{(-1)}) = \Pr\left\{ \bbC + w_1 \cN(-\eta, \sigma^2) > 0 \right\} > \Pr\left\{ \bbC + p \cN(-\eta, \sigma^2) > 0 \right\}, 
\end{aligned}
\end{equation}
where $p > 0$ is any positive number. Therefore, minimizing the adversarial risk on $\cT_{\adv}$ leads to $w_1>0$.
\end{proof}

\textbf{Theorem~\ref{thm:adv-harmless} (restated).}
\emph{
Assume that the adversarial perturbation in the training data $\cT_{\adv}$~(\ref{eq:mixGau_adv}) is moderate such that $\eta/2 \le \epsilon < 1/2$. Then, the optimal linear $\linf$-robust classifier obtained by minimizing the adversarial risk on $\cT_{\adv}$ with a defense budget $\epsilon$ is equivalent to the robust classifier~(\ref{eq:rob_classifier}).
}

\begin{proof}
By \cref{lemma.b1} and \cref{lemma.b2}, we have $w_1>0$ and $w_i=0$ ($i\ge 2$) for an optimal linear $\linf$-robust classifier. Then, the adversarial risk on the distribution $\cT_{\adv}$ can be simplified by solving the inner maximization problem first. Formally,
\begin{equation}
\label{eq:asdgasdf}
\begin{aligned}
    \cR_{\adv}^{\epsilon}(f, \cT_{\adv})
    =& \underset{(\vx, y) \sim \cT_{\adv}}{\Pr} \left\{\exists \|\vdelta\|_{\infty} \le \epsilon, f(\vx+\vdelta) \neq y \right\} \\
    =& \underset{(\vx, y) \sim \cT_{\adv}}{\Pr} \left\{ \min_{\|\vdelta\|_{\infty} \le \epsilon} \left[y \cdot f(\vx+\vdelta)\right] < 0 \right\} \\
    =& \underset{(\vx, y) \sim \cT_{\adv}}{\Pr} \left\{ \max_{\|\vdelta\|_{\infty} \le \epsilon} \left[f(\vx+\vdelta)\right] > 0 \ | \ y = -1\right\} \cdot \underset{(\vx, y) \sim \cT_{\adv}}{\Pr} \left\{ y = -1 \right\} \\
    &+ \underset{(\vx, y) \sim \cT_{\adv}}{\Pr} \left\{ \min_{\|\vdelta\|_{\infty} \le \epsilon} \left[f(\vx+\vdelta)\right] < 0 \ | \ y = +1\right\} \cdot \underset{(\vx, y) \sim \cT_{\adv}}{\Pr} \left\{ y = +1 \right\} \\
    =& \Pr\left\{ \max_{\|\vdelta\|_{\infty} \le \epsilon} \left[ w_1(\cN(\epsilon-1, \sigma^2) + \delta_1) + b \right] > 0 \right\} \cdot \frac{1}{2} \\
    &+ \Pr\left\{ \min_{\|\vdelta\|_{\infty} \le \epsilon} \left[ w_1(\cN(1-\epsilon, \sigma^2) + \delta_1) + b \right] < 0 \right\} \cdot \frac{1}{2} \\
    =& \Pr\left\{ w_1\cN(2\epsilon-1, \sigma^2) + b > 0 \right\} \cdot \frac{1}{2} \\
    &+ \Pr\left\{ w_1\cN(1-2\epsilon, \sigma^2) + b < 0 \right\} \cdot \frac{1}{2}, \\
\end{aligned}
\end{equation}
which is equivalent to the natural risk on a mixture Gaussian distribution $\cD_{\text{tmp}}: \vx \sim \cN(y\cdot\vmu_{\text{tmp}}, \sigma^2\vI)$, where $\vmu_{\text{tmp}}=(1-2\epsilon, 0, \ldots, 0)$. We note that the Bayes optimal classifier for $\cD_{\text{tmp}}$ is $f_{\text{tmp}}(\vx) = \sign(\vmu_{\text{tmp}}^{\top}\vx)$. Specifically, the natural risk
\begin{equation}
\begin{aligned}
    \cR_{\adv}^{0}(f, \cD_{\text{tmp}})
    =& \underset{(\vx, y) \sim \cD_{\text{tmp}}}{\Pr} \left\{ f(\vx) \neq y \right\} \\
    =& \underset{(\vx, y) \sim \cD_{\text{tmp}}}{\Pr} \left\{ y \cdot f(\vx) < 0 \right\} \\
    =& \Pr\left\{w_1\cN(2\epsilon-1, \sigma^2)+b>0\right\} \cdot \frac{1}{2} \\
    &+ \Pr\left\{w_1\cN(1-2\epsilon, \sigma^2)+b<0\right\} \cdot \frac{1}{2}, \\
\end{aligned}
\end{equation}
which is minimized when $w_1 = 1 - 2\epsilon > 0$ and $b=0$. That is, minimizing the adversarial risk $\cR_{\adv}^{\epsilon}(f, \cT_{\adv})$ can lead to an optimal linear $\linf$-robust classifier $f_{\text{tmp}}(\vx)$.
Meanwhile, $f_{\text{tmp}}(\vx)$ is equivalent to the robust classifier~(\ref{eq:rob_classifier}). This concludes the proof of the theorem.
\end{proof}

\subsection{Proof of~\cref{thm:hyp-harmful}}
\label{app:proof-thm-hyp-harmful}

\begin{lemma}
\label{lemma.b3}
Minimizing the adversarial risk (\ref{equa.linear_adv_risk}) on the data $\cT_{\hyp}$~(\ref{eq:mixGau_hyp}) with a defense budget $\epsilon$ results in a classifier that assigns positive weights to the features $x_i$ for $i \geq 1$.
\end{lemma}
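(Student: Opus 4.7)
The plan is to follow the proof-by-contradiction template of Lemmas~\ref{lemma.b1} and~\ref{lemma.b2}, decomposing the adversarial risk on $\cT_{\hyp}$ by conditioning on the label. By symmetry of the Gaussian mixture, I would write
\begin{equation*}
\cR_{\adv}^{\epsilon}(f, \cT_{\hyp}) = \tfrac{1}{2}\cR_{\adv}^{\epsilon}(f, \cT_{\hyp}^{(-1)}) + \tfrac{1}{2}\cR_{\adv}^{\epsilon}(f, \cT_{\hyp}^{(+1)}),
\end{equation*}
where, for example, $\cR_{\adv}^{\epsilon}(f, \cT_{\hyp}^{(-1)}) = \Pr\{\max_{\|\vdelta\|_{\infty}\le\epsilon}[w_1(\cN(-1-\epsilon,\sigma^2)+\delta_1) + \sum_{i\ge 2} w_i(\cN(-\eta-\epsilon,\sigma^2)+\delta_i) + b] > 0\}$. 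I would then suppose, for contradiction, that some coordinate has $w_i \le 0$ in a minimizer and exhibit a strictly better candidate by replacing $w_i$ with a positive value.

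For the strict-sign step (the case $w_i < 0$), I would use the same pairing trick as in Lemmas~\ref{lemma.b1}--\ref{lemma.b2}: compare the contribution of coordinate $i$ to both conditionals when the weight is $w_i < 0$ versus when it is flipped to $-w_i > 0$, with all other coordinates held fixed. For $y=-1$, the worst-case $\delta_i$ switches from $-\epsilon$ to $+\epsilon$, so the mean of the added Gaussian changes from $-w_i(\eta+2\epsilon)>0$ to $w_i\eta<0$ for $i\ge 2$ (and from $-w_i(1+2\epsilon)$ to $-w_i$ for $i=1$, with a uniformly negative mean). A symmetric computation for $y=+1$ shows the added-Gaussian mean flips from a non-positive value to a strictly positive value. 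Since the remainder term $\bbA$ is independent of the coordinate-$i$ perturbation, applying the elementary fact that $\Pr\{\bbA + \cN(m,\sigma^2)>0\}$ is strictly increasing in $m$ gives a strict decrease in both conditional risks, contradicting optimality.

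For the case $w_i = 0$, which is needed to obtain strict positivity rather than mere non-negativity, I would do a first-order perturbation argument: consider replacing $w_i = 0$ by $w_i = t$ for small $t>0$. On $\{y=-1\}$ the added term is $t\cN(-\eta,\sigma^2)$ (or $t\cN(-1,\sigma^2)$ when $i=1$) with strictly negative mean; on $\{y=+1\}$ it is $t\cN(\eta,\sigma^2)$ with strictly positive mean. Differentiating each conditional probability at $t=0$ through the Gaussian convolution expression and using that the density of the remaining sum is positive on a neighborhood of $0$ gives a strictly negative derivative of the total risk at $t=0$, so a small positive $w_i$ strictly improves upon $w_i=0$. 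Combining this with the previous paragraph rules out $w_i \le 0$ entirely.

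The main obstacle I anticipate is the $w_i=0$ boundary case: unlike the analogous arguments in Lemmas~\ref{lemma.b1} and~\ref{lemma.b2}, here both flipping the sign and enlarging the magnitude from zero must be handled. A clean way to do the latter is to express $\Pr\{\bbA + t\, Z > 0\}$ as a convolution integral and differentiate under the integral sign, using that the density of $\bbA$ (a sum of Gaussian-like random variables with a non-trivial spread coming from the other coordinates, which is non-zero after applying the arguments for those coordinates in any order) is strictly positive at $0$. This bookkeeping across the two conditionals and across the cases $i=1$ versus $i\ge 2$ is the only slightly delicate part; the rest is a direct adaptation of the Lemma~\ref{lemma.b1} template.
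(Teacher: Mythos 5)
Your proposal is correct and follows the same overall route as the paper's proof of \cref{lemma.b3}: decompose the adversarial risk by label, assume a minimizer with some $w_i\le 0$, and derive a contradiction by comparing the worst-case Gaussian contribution of coordinate $i$ before and after making the weight positive. The differences are two refinements at precisely the points where the paper is loose. First, your pairing $w_i\mapsto -w_i$ correctly tracks that the adversary's worst-case $\delta_i$ flips from $-\epsilon$ to $+\epsilon$ along with the sign of the weight, so you compare $w_i\,\cN(-[\vw_{\nat}]_i-2\epsilon,\sigma^2)$ (mean $>0$) against $-w_i\,\cN(-[\vw_{\nat}]_i,\sigma^2)$ (mean $<0$) at \emph{equal variance}, which makes the strict monotonicity-in-the-mean step airtight; the paper instead compares against $p\,\cN(-[\vw_{\nat}]_i-2\epsilon,\sigma^2)$ for an arbitrary $p>0$, which is not the actual worst-case term for a positive weight (that term has mean $-p[\vw_{\nat}]_i$), so its displayed inequality understates the candidate's risk. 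Second, you separate out the boundary case $w_i=0$, where any sign-flip comparison is vacuous, and obtain strict positivity via a first-order derivative of the Gaussian convolution at $t=0$; the paper folds $w_i=0$ into the general $w_i\le0$ argument and simply asserts a strict inequality there. The one loose end in your plan is the degenerate case where all remaining coordinates also have zero weight, so the remainder term is deterministic and the derivative computation needs the nondegeneracy you invoke; this is easily dispatched (the all-zero classifier has risk $1/2$ and is trivially suboptimal), but it should be stated explicitly.
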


\begin{proof}
We prove the lemma by contradiction. 

The goal is to minimize the adversarial risk on the distribution $\cT_{\hyp}$, which can be written as follows:

\begin{equation}
\label{eq:hfgjhfgj}
\begin{aligned}
    \cR_{\adv}^{\epsilon}(f, \cT_{\hyp})
    =& \underset{(\vx, y) \sim \cT_{hyp}}{\Pr} \left\{\exists \|\vdelta\|_{\infty} \le \epsilon, f(\vx+\vdelta) \neq y \right\} \\
    =& \underset{(\vx, y) \sim \cT_{\hyp}}{\Pr} \left\{ \min_{\|\vdelta\|_{\infty} \le \epsilon} \left[y \cdot f(\vx+\vdelta)\right] < 0 \right\} \\
    =& \underset{(\vx, y) \sim \cT_{\hyp}}{\Pr} \left\{ \max_{\|\vdelta\|_{\infty} \le \epsilon} \left[f(\vx+\vdelta)\right] > 0 \ | \ y = -1\right\} \cdot \underset{(\vx, y) \sim \cT_{\hyp}}{\Pr} \left\{ y = -1 \right\} \\
    &+ \underset{(\vx, y) \sim \cT_{\hyp}}{\Pr} \left\{ \min_{\|\vdelta\|_{\infty} \le \epsilon} \left[f(\vx+\vdelta)\right] < 0 \ | \ y = +1\right\} \cdot \underset{(\vx, y) \sim \cT_{\hyp}}{\Pr} \left\{ y = +1 \right\} \\
    =& \underbrace{\Pr\left\{ \max_{\|\vdelta\|_{\infty} \le \epsilon} \left[ w_1(\cN(-1-\epsilon, \sigma^2) + \delta_1) + \sum_{i=2}^{d+1} w_i (\cN(-\eta-\epsilon, \sigma^2) + \delta_i) + b \right] > 0 \right\}}_{\cR_{\adv}^{\epsilon}(f, \cT_{\hyp}^{(-1)})} \cdot \frac{1}{2} \\
    &+ \underbrace{\Pr\left\{ \min_{\|\vdelta\|_{\infty} \le \epsilon} \left[ w_1(\cN(1+\epsilon, \sigma^2) + \delta_1) + \sum_{i=2}^{d+1} w_i (\cN(\eta+\epsilon, \sigma^2) + \delta_i) + b \right] < 0 \right\}}_{\cR_{\adv}^{\epsilon}(f, \cT_{\hyp}^{(+1)})} \cdot \frac{1}{2} \\
\end{aligned}
\end{equation}

Consider an optimal solution $\vw$ in which $w_i \le 0$ for some $i\ge1$. Then, we have
\begin{equation}
\begin{aligned}
    \cR_{\adv}^{\epsilon}(f, \cT_{\hyp}^{(-1)}) = \Pr\left\{ \underbrace{\sum_{j \neq i} \max_{\|\delta_j\| \le \epsilon} \left[ w_j (\cN( - [\vw_{\nat}]_j - \epsilon, \sigma^2) + \delta_j) + b \right]}_{\bbG} + \underbrace{\max_{\|\delta_i\| \le \epsilon} \left[ w_i (\cN(- [\vw_{\nat}]_i - \epsilon, \sigma^2) + \delta_i) \right]}_{\bbH} > 0 \right\}, \\
\end{aligned}
\end{equation}
where $\vw_{\nat} \coloneqq[1, \eta, \ldots, \eta]$ as in~\cref{eq:std_classifier}. Since $w_i \le 0$, $\bbH$ is maximized when $\delta_i = -\epsilon$. Thus, the contribution of terms depending on $w_i$ to $\bbH$ is a normally-distributed random variable with mean $-[\vw_{\nat}]_i-2\epsilon$. Since the mean is negative, setting $w_i$ to be positive can decrease the risk. This contradicts the optimality of $\vw$. Formally,
\begin{equation}
\begin{aligned}
    \cR_{\adv}^{\epsilon}(f, \cT_{\adv}^{(-1)}) = \Pr\left\{ \bbG + w_i \cN(-[\vw_{\nat}]_i-2\epsilon, \sigma^2) > 0 \right\} > \Pr\left\{ \bbG + p \cN(-[\vw_{\nat}]_i-2\epsilon, \sigma^2) > 0 \right\},
\end{aligned}
\end{equation}
where $p>0$ is any positive number. Therefore, minimizing the adversarial risk on $\cT_{\hyp}$ leads to $w_i>0$ for $i\ge1$.
\end{proof}

\textbf{Theorem~\ref{thm:hyp-harmful} (restated).}
\emph{
The optimal linear $\linf$-robust classifier obtained by minimizing the adversarial risk on the perturbed data $\cT_{\hyp}$~(\ref{eq:mixGau_hyp}) with a defense budget $\epsilon$ is equivalent to the natural classifier~(\ref{eq:std_classifier}).
}

\begin{proof}
By~\cref{lemma.b3}, we have $w_i>0$ for $i\ge 1$ for an optimal linear $\linf$-robust classifier. Then, we have
\begin{equation}
\label{eq:jertyetry}
\begin{aligned}
    \cR_{\adv}^{\epsilon}(f, \cT_{\hyp})
    =& \underset{(\vx, y) \sim \cT_{\hyp}}{\Pr} \left\{\exists \|\vdelta\|_{\infty} \le \epsilon, f(\vx+\vdelta) \neq y \right\} \\
    =& \underset{(\vx, y) \sim \cT_{\hyp}}{\Pr} \left\{ \min_{\|\vdelta\|_{\infty} \le \epsilon} \left[y \cdot f(\vx+\vdelta)\right] < 0 \right\} \\
    =& \underset{(\vx, y) \sim \cT_{\hyp}}{\Pr} \left\{ \max_{\|\vdelta\|_{\infty} \le \epsilon} \left[f(\vx+\vdelta)\right] > 0 \ | \ y = -1\right\} \cdot \underset{(\vx, y) \sim \cT_{\hyp}}{\Pr} \left\{ y = -1 \right\} \\
    &+ \underset{(\vx, y) \sim \cT_{\hyp}}{\Pr} \left\{ \min_{\|\vdelta\|_{\infty} \le \epsilon} \left[f(\vx+\vdelta)\right] < 0 \ | \ y = +1\right\} \cdot \underset{(\vx, y) \sim \cT_{\hyp}}{\Pr} \left\{ y = +1 \right\} \\
    =& \Pr\left\{ \max_{\|\delta_1\|_{\infty} \le \epsilon} \left[ w_1(\cN(-1-\epsilon, \sigma^2) + \delta_1) \right] + \sum_{i=2}^{d+1} \max_{\|\delta_i\|_{\infty} \le \epsilon} \left[ w_i(\cN(-\eta-\epsilon) + \delta_i) \right] + b > 0 \right\} \cdot \frac{1}{2} \\
    &+ \Pr\left\{ \min_{\|\delta_1\|_{\infty} \le \epsilon} \left[ w_1(\cN(1+\epsilon, \sigma^2) + \delta_1) \right] + \sum_{i=2}^{d+1} \min_{\|\delta_i\|_{\infty} \le \epsilon} \left[ w_i(\cN(\eta+\epsilon) + \delta_i) \right] + b < 0 \right\} \cdot \frac{1}{2} \\
    =& \Pr\left\{ w_1\cN(-1, \sigma^2) + \sum_{i=2}^{d+1} w_i\cN(-\eta, \sigma^2) + b > 0 \right\} \cdot \frac{1}{2} \\
    &+ \Pr\left\{ w_1\cN(1, \sigma^2) + \sum_{i=2}^{d+1} w_i\cN(\eta, \sigma^2) + b < 0 \right\} \cdot \frac{1}{2}, \\
\end{aligned}
\end{equation}
which is equivalent to the natural risk on the mixture Gaussian distribution $\cD: \vx \sim \cN(y\cdot\vw_{\nat}, \sigma^2\vI)$, where $\vw_{\nat}=(1, \eta, \ldots, \eta)$. We note that the Bayes optimal classifier for $\cD$ is $f_{\nat}(\vx) = \sign(\vw_{\nat}^{\top}\vx)$. Specifically, the natural risk
\begin{equation}
\begin{aligned}
    \cR_{\adv}^{0}(f, \cD)
    =& \underset{(\vx, y) \sim \cD}{\Pr} \left\{ f(\vx) \neq y \right\} \\
    =& \underset{(\vx, y) \sim \cD}{\Pr} \left\{ y \cdot f(\vx) < 0 \right\} \\
    =& \Pr\left\{w_1\cN(-1, \sigma^2) + \sum_{i=2}^{d+1} w_i \cN(-\eta, \sigma^2) + b > 0\right\} \cdot \frac{1}{2} \\
    &+ \Pr\left\{w_1\cN(1, \sigma^2) + \sum_{i=2}^{d+1} w_i \cN(\eta, \sigma^2) + b < 0\right\} \cdot \frac{1}{2}, \\
\end{aligned}
\end{equation}
which is minimized when $w_1 = 1$, $w_i=\eta$ for $i\ge 2$, and $b=0$. That is, minimizing the adversarial risk $\cR_{\adv}^{\epsilon}(f, \cT_{\hyp})$ can lead to an optimal linear $\linf$-robust classifier $f_{\nat}(\vx)$, which is equivalent to the natural classifier~(\ref{eq:std_classifier}). This concludes the proof of the theorem.

\end{proof}

\subsection{Proof of~\cref{thm:eps-eta-necessary}}
\label{app:proof-thm-eps-eta-necessary}

\begin{lemma}
\label{lemma.b4}
Minimizing the adversarial risk (\ref{equa.linear_adv_risk}) on the data $\cT_{\hyp}$~(\ref{eq:mixGau_hyp}) with a defense budget $\epsilon+\eta$ can result in a classifier that assigns $0$ weight to the features $x_i$ for $i \geq 2$.
\end{lemma}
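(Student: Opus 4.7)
The plan is to adapt the contradiction argument of Lemma \ref{lemma.b1} (the adversarial-perturbation case) to the hypocritical distribution $\cT_{\hyp}$, exploiting the fact that the enlarged defense budget $\epsilon+\eta$ is exactly the magnitude of each non-robust feature's (shifted) mean in $\cT_{\hyp}$. Concretely, the worst-case adversarial perturbation can now exactly cancel the hypocritical reinforcement on every non-robust coordinate, reducing each to a mean-zero Gaussian. This is the analogue of the identity $2\epsilon - \eta \ge 0$ that drove Lemma \ref{lemma.b1}, but the conclusion here will be a weaker ``exists an optimum of the required form'' claim rather than ``every optimum has this form.''

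First, I would expand $\cR_{\adv}^{\epsilon+\eta}(f, \cT_{\hyp})$ by class as in display (\ref{eq:hfgjhfgj}), with $\epsilon$ replaced by $\epsilon+\eta$ inside the $\max_{\|\vdelta\|_{\infty} \le \cdot}$. Then, by the same contradiction strategy as in Lemma \ref{lemma.b3}, I would argue that any optimal $(\vw^*, b^*)$ must satisfy $w_i^* \ge 0$ for all $i\ge 2$ and $w_1^* > 0$: if some $w_i^* < 0$ with $i \ge 2$, the worst-case $\delta_i = -(\epsilon+\eta)$ makes the contribution to the $y=-1$ discriminant a Gaussian with mean $-2 w_i^*(\epsilon+\eta) > 0$, strictly worsening the risk compared to flipping that coordinate to a nonnegative value. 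With the sign restriction in hand, the adversary's optimal choice for $i \ge 2$ becomes $\delta_i = (\epsilon+\eta)\cdot(-y)$, which turns the contribution of each non-robust feature into $w_i^*\,\cN(0,\sigma^2)$, a mean-zero Gaussian independent of $y$.

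Next, I would compare the optimal classifier $(\vw^*, b^*)$ against the truncated classifier $(\vw', b^*)$ with $w_1' = w_1^*$ and $w_i' = 0$ for $i \ge 2$. Under the worst-case adversary, the post-attack discriminants on the two classes become Gaussian with identical means $w_1^*(-1+\eta) + b^*$ and $w_1^*(1-\eta) + b^*$ respectively, while the variance drops from $\bigl((w_1^*)^2 + \sum_{i\ge 2}(w_i^*)^2\bigr)\sigma^2$ to $(w_1^*)^2\sigma^2$. Since $\Pr\{\cN(\mu,\tau^2)>0\} = \Phi(\mu/\tau)$ is increasing in $\mu/\tau$, shrinking $\tau$ weakly decreases the class-conditional risk whenever the mean sits on the ``correct'' side: $w_1^*(-1+\eta) + b^* \le 0$ for $y = -1$ and $w_1^*(1-\eta) + b^* \ge 0$ for $y = +1$, i.e., $|b^*| \le w_1^*(1-\eta)$. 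Hence the truncated classifier achieves risk no larger than the original, proving existence of an optimum with $w_i = 0$ for $i \ge 2$.

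The main obstacle is justifying the mean-on-the-correct-side condition $|b^*| \le w_1^*(1-\eta)$ at the optimum. I would handle this via a symmetry/reduction argument: because $\cT_{\hyp}$ is symmetric under the joint flip $(\vx,y)\mapsto(-\vx,-y)$, one can restrict attention to optima with $b^* = 0$ (by averaging the objective contributions of $(\vw^*, b^*)$ and its class-flipped counterpart, or by directly checking that the 0-1 adversarial risk in display (\ref{eq:hfgjhfgj}) is minimized in $b$ at $b=0$ once the sign restrictions on $\vw$ are fixed). With $b^* = 0$ and $w_1^* > 0$, the mean condition is automatic because $1 - \eta > 0$, and the variance-reduction step goes through cleanly, yielding the lemma.
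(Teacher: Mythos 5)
Your proposal is correct and follows essentially the same route as the paper: the key observation in both is that the defense budget $\epsilon+\eta$ exactly cancels the reinforced mean $\eta+\epsilon$ of each non-robust coordinate in $\cT_{\hyp}$, reducing its contribution to a mean-zero Gaussian whose removal only shrinks the variance of the discriminant and hence weakly decreases the 0-1 adversarial risk. If anything you are more careful than the paper, which asserts $\Pr\{\bbI + w_i\cN(0,\sigma^2)>0\}\ge\Pr\{\bbI>0\}$ without noting that this requires the mean of $\bbI$ to lie on the correct side of zero --- a condition you explicitly isolate ($|b^*|\le w_1^*(1-\eta)$) and discharge via the symmetry of $\cT_{\hyp}$ under $(\vx,y)\mapsto(-\vx,-y)$.
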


\begin{proof}

The goal is to minimize the adversarial risk on the distribution $\cT_{\hyp}$, which can be written as follows:

\begin{equation}
\label{eq:jdfgjsdfg}
\begin{aligned}
    \cR_{\adv}^{\epsilon+\eta}(f, \cT_{\hyp})
    =& \underset{(\vx, y) \sim \cT_{\hyp}}{\Pr} \left\{\exists \|\vdelta\|_{\infty} \le \epsilon+\eta, f(\vx+\vdelta) \neq y \right\} \\
    =& \underset{(\vx, y) \sim \cT_{\hyp}}{\Pr} \left\{ \min_{\|\vdelta\|_{\infty} \le \epsilon+\eta} \left[y \cdot f(\vx+\vdelta)\right] < 0 \right\} \\
    =& \underset{(\vx, y) \sim \cT_{\hyp}}{\Pr} \left\{ \max_{\|\vdelta\|_{\infty} \le \epsilon+\eta} \left[f(\vx+\vdelta)\right] > 0 \ | \ y = -1\right\} \cdot \underset{(\vx, y) \sim \cT_{\hyp}}{\Pr} \left\{ y = -1 \right\} \\
    &+ \underset{(\vx, y) \sim \cT_{\hyp}}{\Pr} \left\{ \min_{\|\vdelta\|_{\infty} \le \epsilon+\eta} \left[f(\vx+\vdelta)\right] < 0 \ | \ y = +1\right\} \cdot \underset{(\vx, y) \sim \cT_{\hyp}}{\Pr} \left\{ y = +1 \right\} \\
    =& \underbrace{\Pr\left\{ \max_{\|\vdelta\|_{\infty} \le \epsilon+\eta} \left[ w_1(\cN(-1-\epsilon, \sigma^2) + \delta_1) + \sum_{i=2}^{d+1} w_i (\cN(-\eta-\epsilon, \sigma^2) + \delta_i) + b \right] > 0 \right\}}_{\cR_{\adv}^{\epsilon+\eta}(f, \cT_{\hyp}^{(-1)})} \cdot \frac{1}{2} \\
    &+ \underbrace{\Pr\left\{ \min_{\|\vdelta\|_{\infty} \le \epsilon+\eta} \left[ w_1(\cN(1+\epsilon, \sigma^2) + \delta_1) + \sum_{i=2}^{d+1} w_i (\cN(\eta+\epsilon, \sigma^2) + \delta_i) + b \right] < 0 \right\}}_{\cR_{\adv}^{\epsilon+\eta}(f, \cT_{\hyp}^{(+1)})} \cdot \frac{1}{2} \\
\end{aligned}
\end{equation}

Consider an optimal solution $\vw$ in which $w_i > 0$ for some $i\ge2$. Then, we have
\begin{equation}
\begin{aligned}
    \cR_{\adv}^{\epsilon+\eta}(f, \cT_{\hyp}^{(-1)}) = \Pr\left\{ \underbrace{\sum_{j \neq i} \max_{\|\delta_j\| \le \epsilon+\eta} \left[ w_j (\cN(-[\vw_{\nat}]_j-\epsilon, \sigma^2) + \delta_j) + b \right]}_{\bbI} + \underbrace{\max_{\|\delta_i\| \le \epsilon+\eta} \left[ w_i (\cN(-\eta-\epsilon, \sigma^2) + \delta_i) \right]}_{\bbJ} > 0 \right\}, \\
\end{aligned}
\end{equation}
where $\vw_{\nat} \coloneqq[1, \eta, \ldots, \eta]$. Since $w_i > 0$, $\bbJ$ is maximized when $\delta_i = \epsilon + \eta$. Thus, the contribution of terms depending on $w_i$ to $\bbJ$ is a normally-distributed random variable with mean $0$. Thus, setting $w_i$ to zero will not increase the risk. Formally, we have
\begin{equation}
\begin{aligned}
    \cR_{\adv}^{\epsilon+\eta}(f, \cT_{\hyp}^{(-1)}) = \Pr\left\{ \bbI + w_i \cN(0, \sigma^2) > 0 \right\} \ge \Pr\left\{\bbI > 0\right\}.
\end{aligned}
\end{equation}
We can also assume $w_i<0$ and a similar argument holds. Similar arguments also hold for $\cR_{\adv}^{\epsilon+\eta}(f, \cT_{\hyp}^{(+1)})$. Therefore, minimizing the adversarial risk on $\cT_{\hyp}$ can lead to $w_i=0$ for $i\ge 2$.
\end{proof}

\textbf{Theorem~\ref{thm:eps-eta-necessary} (restated).}
\emph{
The optimal linear $\linf$-robust classifier obtained by minimizing the adversarial risk on the perturbed data $\cT_{\hyp}$~(\ref{eq:mixGau_hyp}) with a defense budget $\epsilon+\eta$ is equivalent to the robust classifier~(\ref{eq:rob_classifier}). Moreover, any defense budget lower than $\epsilon+\eta$ will yield classifiers that still rely on all the non-robust features.
}

\begin{proof}
By \cref{lemma.b4}, we have $w_i=0$ ($i\ge 2$) for an optimal linear $\linf$-robust classifier. Also, the robust classifier will assign a positive weight to the first feature. This is similar to the case in~\cref{lemma.b2} and we omit the proof here. Then, we have
\begin{equation}
\label{eq:qegsdhbrhy}
\begin{aligned}
    \cR_{\adv}^{\epsilon+\eta}(f, \cT_{\hyp})
    =& \underset{(\vx, y) \sim \cT_{\hyp}}{\Pr} \left\{\exists \|\vdelta\|_{\infty} \le \epsilon+\eta, f(\vx+\vdelta) \neq y \right\} \\
    =& \underset{(\vx, y) \sim \cT_{\hyp}}{\Pr} \left\{ \min_{\|\vdelta\|_{\infty} \le \epsilon+\eta} \left[y \cdot f(\vx+\vdelta)\right] < 0 \right\} \\
    =& \underset{(\vx, y) \sim \cT_{\hyp}}{\Pr} \left\{ \max_{\|\vdelta\|_{\infty} \le \epsilon+\eta} \left[f(\vx+\vdelta)\right] > 0 \ | \ y = -1\right\} \cdot \underset{(\vx, y) \sim \cT_{\hyp}}{\Pr} \left\{ y = -1 \right\} \\
    &+ \underset{(\vx, y) \sim \cT_{\hyp}}{\Pr} \left\{ \min_{\|\vdelta\|_{\infty} \le \epsilon+\eta} \left[f(\vx+\vdelta)\right] < 0 \ | \ y = +1\right\} \cdot \underset{(\vx, y) \sim \cT_{\hyp}}{\Pr} \left\{ y = +1 \right\} \\
    =& \Pr\left\{ \max_{\|\vdelta\|_{\infty} \le \epsilon+\eta} \left[ w_1(\cN(-1-\epsilon, \sigma^2) + \delta_1) + b \right] > 0 \right\} \cdot \frac{1}{2} \\
    &+ \Pr\left\{ \min_{\|\vdelta\|_{\infty} \le \epsilon+\eta} \left[ w_1(\cN(1+\epsilon, \sigma^2) + \delta_1) + b \right] < 0 \right\} \cdot \frac{1}{2} \\
    =& \Pr\left\{ w_1\cN(-1-\eta, \sigma^2) + b > 0 \right\} \cdot \frac{1}{2} \\
    &+ \Pr\left\{ w_1\cN(1-\eta, \sigma^2) + b < 0 \right\} \cdot \frac{1}{2}, \\
\end{aligned}
\end{equation}
which is equivalent to the natural risk on a mixture Gaussian distribution $\cD_{\text{tmp}}: \vx \sim \cN(y\cdot\vmu_{\text{tmp}}, \sigma^2\vI)$, where $\vmu_{\text{tmp}}=(1-\eta, 0, \ldots, 0)$. We note that the Bayes optimal classifier for $\cD_{\text{tmp}}$ is $f_{\text{tmp}}(\vx) = \sign(\vmu_{\text{tmp}}^{\top}\vx)$. Specifically, the natural risk
\begin{equation}
\begin{aligned}
    \cR_{\adv}^{0}(f, \cD_{\text{tmp}})
    =& \underset{(\vx, y) \sim \cD_{\text{tmp}}}{\Pr} \left\{ f(\vx) \neq y \right\} \\
    =& \underset{(\vx, y) \sim \cD_{\text{tmp}}}{\Pr} \left\{ y \cdot f(\vx) < 0 \right\} \\
    =& \Pr\left\{w_1\cN(-1-\eta, \sigma^2)+b>0\right\} \cdot \frac{1}{2} \\
    &+ \Pr\left\{w_1\cN(1-\eta, \sigma^2)+b<0\right\} \cdot \frac{1}{2}, \\
\end{aligned}
\end{equation}
which is minimized when $w_1 = 1 - \eta > 0$ and $b=0$. That is, minimizing the adversarial risk $\cR_{\adv}^{\epsilon+\eta}(f, \cT_{\hyp})$ can lead to an optimal linear $\linf$-robust classifier $f_{\text{tmp}}(\vx)$, which is equivalent to the robust classifier~(\ref{eq:rob_classifier}).

Moreover, when the defense budget $\epsilon_d$ is less than $\epsilon + \eta$, the condition in \cref{lemma.b4} no longer holds. Instead, in this case, the robust classifier will assign positive weights to the features (i.e., $w_i>0$ for $i \ge 1$). This is similar to the case in \cref{lemma.b3}, and thus we omit the proof here. Consequently, this yields classifiers that still rely on all the non-robust features.

\end{proof}

\subsection{Proof of~\cref{thm:two-eps-sufficient}}
\label{app:proof-thm-two-eps-sufficient}

\textbf{Theorem~\ref{thm:two-eps-sufficient} (restated).}
\emph{
For any data distribution and any adversary with an attack budget $\epsilon$, training models to minimize the adversarial risk with a defense budget $2\epsilon$ on the perturbed data is sufficient to ensure $\epsilon$-robustness.
}

\begin{proof}
For clarity, we rewrite the adversarial risk in~(\ref{eq:adv_risk}) with a defense budget $\epsilon$ as follows:
\begin{equation}
\label{eq:adv_risk_rewrite}
    \cR_{\adv}^{\epsilon}(f, \cT) \coloneqq \underset{(\vx, y) \in \cT}{\sum} \left[\max_{\|\vdelta\| \le \epsilon} \cL(f(\vx+\vdelta), y)\right],
\end{equation}
where $\cT = \{(\vx_i, y_i)\}_{i=1}^n$ denotes the empirical training data.

Consider any adversary with an attack budget $\epsilon$, who can perturb $\vx$ to $\vx+\vp$ such that $\|\vp\| \le \epsilon$. Then, the learner will receive a perturbed version of training data $\cT' = \{(\vx_i+\vp_i, y_i)\}_{i=1}^n$. 

For any perturbed data point $(\vx_i+\vp_i, y_i)$, we have
\begin{equation}
\begin{aligned}
    \max_{\|\vdelta\| \le 2\epsilon} \cL(f(\vx_i+\vp_i + \vdelta), y_i) 
    &= \max_{\|\vdelta\| \le \epsilon, \|\vxi\| \le \epsilon} \cL(f(\vx_i+\vp_i + \vdelta + \vxi), y_i) \\
    &\ge \max_{\|\vdelta\| \le \epsilon} \cL(f(\vx_i+\vp_i + \vdelta -\vp_i), y_i) \\
    &= \max_{\|\vdelta\| \le \epsilon} \cL(f(\vx_i + \vdelta), y_i).
\end{aligned}
\end{equation}

By summarizing the training points, we have
\begin{equation}
\begin{aligned}
    \cR_{\adv}^{2\epsilon}(f, \cT') \ge \cR_{\adv}^{\epsilon}(f, \cT).
\end{aligned}
\end{equation}

That is, the adversarial risk with a defense budget $2\epsilon$ on the perturbed data is an upper bound of the adversarial risk with a defense budget $\epsilon$ on the original data. Therefore, a defense budget $2\epsilon$ is sufficient to ensure the learning of $\epsilon$-robustness.
\end{proof}

\clearpage

\section{Experimental Settings}
\label{sec:experimental-settings}


\paragraph{Adversary capability.}
We focus on the clean-label setting, where an adversary can only provide correctly labeled but misleading training data. In this setting, the main constraint is to craft perturbations as small as possible~\cite{feng2019learning}. Thus, we consider an $\linf$ adversary with an \textit{attack budget} $\epsilon_{a}=8/255$ by following~\citet{huang2021unlearnable, pmlr-v139-yuan21b, tao2021provable, fowl2021adversarial}. We note that this constraint is consistent with common research on test-time adversarial examples~\citep{athalye2018obfuscated}.

\paragraph{Crafting details.}
We conduct stability attacks by applying the hypocritical perturbation into the training set. Unless otherwise specified, we craft the perturbations by solving the error-minimizing objective~(\ref{eq:hypocritical_perturbation}) with 100 steps of PGD, where a step size of $0.8/255$ is used by following~\citet{fowl2021adversarial}. Our crafting model is adversarially trained with a \textit{crafting budget} $\epsilon_c=0.25\epsilon_a$ for 10 epochs before generating perturbations. That is, setting $\epsilon_c=2/255$ performs best, as shown in~\cref{fig:training-budget-for-crafting-model}. 

\paragraph{Training details.}
We evaluate the effectiveness of the hypocritical perturbation on benchmark datasets including CIFAR-10/100~\cite{Krizhevsky09learningmultiple}, SVHN~\cite{netzer2011reading}, and Tiny-ImageNet~\cite{le2015tiny}. Unless otherwise specified, we use ResNet-18~\cite{he2016deep} as the default architecture for both the crafting model and the learning model.
For adversarial training, we mainly follow the settings in previous studies~\cite{zhang2019theoretically, wang2019improving, rice2020overfitting}. By convention, the \textit{defense budget} is equal to the attack budget, i.e., $\epsilon_d=8/255$. The networks are trained for 100 epochs using SGD with momentum 0.9, weight decay $5 \times 10^{-4}$, and an initial learning rate of 0.1 that is divided by 10 at the 75-th and 90-th epoch. Early stopping is done with holding out 1000 examples from the training set. Simple data augmentations such as random crop and horizontal flip are applied. The inner maximization problem during adversarial training is solved by 10-steps PGD (PGD-10) with step size $2/255$.

\section{Feature-level Analysis on CIFAR-10}

In~\cref{sec:hyp-harmful}, we theoretically showed that the hypocritical perturbation can cause the poisoned model to rely more on non-robust features, thus the natural accuracy of the adversarially trained model is increased while the robust accuracy is decreased. 
In this part, we aim to provide empirical evidence on the role of non-robust features in the success of our poisoning method on a benchmark dataset. In particular, we will demonstrate that our hypocritical perturbation successfully makes the poisoned model learn more non-robust features.

To show this, by following Section 3.2 of~\citet{ilyas2019adversarial}, we construct a training set where the only features that are useful for classification are the non-robust features (that are extracted from the poisoned model). The standard accuracy of the classifier trained on the constructed dataset can reflect how many non-robust features are learned by the poisoned model (denoted as $f$). To accomplish this, we modify each input-label pair $(\vx, y)$ as follows. We select a target class $t$ uniformly at random among classes. Then, we add a small adversarial perturbation to $\vx$ as follows:
$$
\vx_{\adv} = \underset{\|\vx'-\vx\|\le\epsilon}{\arg\min} \ell(f(\vx'), t).
$$
The resulting input-label pairs $(\vx_{\adv}, t)$ make up the new training set. Since the resulting inputs $\vx_{\adv}$ are nearly indistinguishable from the originals $\vx$, the label $t$ assigned to the modified input is simply incorrect to a human observer. Therefore, only the non-robust features in the training set are predictive, while the non-robust features are extracted from the poisoned model.

We compare the model poisoned by our hypocritical perturbation with the baseline model trained on clean data. These two models correspond to the second row and last row in~\cref{tab:bench-attack}, respectively. Using these two models, we construct two datasets in the above-mentioned manner, respectively. Then, two new predictors are trained on the two constructed datasets, respectively, and both predictors are evaluated on clean data. Training parameters follow exactly those adopted by~\citet{ilyas2019adversarial}. Our numerical results are summarized in~\cref{tab:feature-level-analysis}.
  
\begin{table*}[!h]
\centering
\caption{The predictive ability of the non-robust features learned by the poisoned model.}
\label{tab:feature-level-analysis}
\begin{center}
\begin{small}
\begin{tabular}{@{}cc@{}}
\toprule
Model for constructing the training set & Standard accuracy on the original test set (\%) \\ \midrule
The baseline model   & 27.46  \\
The poisoned model   & \textbf{56.77} \\ \bottomrule
\end{tabular}
\end{small}
\end{center}
\magic
\end{table*}

As shown in~\cref{tab:feature-level-analysis}, the non-robust features learned by the poisoned model are much more predictive than the baseline. This indicates that the effect of our poisoning method on the non-robust features learned by the poisoned model is validated empirically.

\section{Broader Impact}
\label{sec:broader-impact}

The attack method in this work might be used by an agent in the real world to damage the robust availability of a machine-learning-based system. We discourage this malicious behavior by presenting the threat model of stability attacks to the community. We further propose an adaptive defense to mitigate this issue. The adaptive defense would help to build a more secure and robust machine learning system in the real world. At the same time, the adaptive defense introduces an additional time cost to search for an appropriate defense budget, which might have a negative impact on carbon emission reduction. Furthermore, society should not be overly optimistic about AI safety, since the current studies mostly focus on perturbations bounded by simple norms (e.g., $\linf$ norm in this paper). There might exist perturbations beyond the $\ell_p$ ball in the real world, and we are still far from complete model robustness.

\section{On the Trade-off between Accuracy and Robustness}

An interesting implication of this work is that the hypocritical perturbation exploits the trade-off between standard generalization and adversarial robustness, a phenomenon that has been widely observed in existing works on adversarial training~\cite{tsipras2018robustness, zhang2019theoretically, dobriban2020provable, mehrabi2021fundamental, su2018robustness, yang2020closer}.

Prior work mainly observed that adversarial training improves robust accuracy at the cost of natural accuracy \textit{when the training data is clean}. An explanation for the phenomenon is that there are non-robust features in the original dataset, which are predictive yet brittle~\cite{tsipras2018robustness, ilyas2019adversarial}. Unlike prior work, the trade-off in this work occurs \textit{when the training data is hypocritically perturbed}. Specifically, we make the following observations:
\begin{enumerate}
  \item When trained on the hypocritically perturbed data, conventional adversarial training produces models with lower robust accuracy but higher natural accuracy (e.g., see~\cref{tab:bench-attack},~\cref{tab:bench-attack-datasets}, and~\cref{tab:bench-attack-architectures}).
  \item When trained on the hypocritically perturbed data, adversarial training with adaptive budget can improve robust accuracy while reducing natural accuracy (e.g., see~\cref{tab:bench-defense}).
\end{enumerate}

These two observations align well with our theoretical analyses in~\cref{sec:how-to-manipulate} and~\cref{sec:necessity-large-budget}, respectively. Concretely, our analyses suggest that the hypocritical perturbation works by reinforcing the non-robust features in the original data, so that the models adversarially trained on the manipulated data still rely on the non-robust features. In this way, the natural accuracy of the models increases because the non-robust features are predictive, while the robust accuracy decreases because the non-robust features are brittle. Furthermore, the effectiveness of the adaptive defense lies in the fact that the reinforced non-robust features can be neutralized by enlarging the defense budget of adversarial training. Thus, the adaptive defense improves robustness at the cost of accuracy.

Meanwhile, we note that it would be unsatisfactory that test robustness is improved at the cost of standard generalization. Several improvements have been proposed to alleviate this issue in the case where the training data is clean, such as RST~\cite{raghunathan2020understanding}, FAT~\cite{zhang2020attacks}, and SCORE~\cite{pang2022robustness}. Incorporating these advances would be helpful in resisting stability attacks, and we leave this as future work.

Finally, we remark that the focus of stability attacks is to degrade test robustness. For this reason, we do not impose additional restrictions on their impact on natural accuracy.
Having that said, as a method of stability attacks, the hypocritical perturbation is observed to improve natural accuracy while reducing robust accuracy. We note that this makes stability attacks more insidious. For example, if a poisoned model exhibits higher natural accuracy, practitioners would be more easily enticed to deploy it in a real-world system. However, as its robust accuracy is actually undesirably low, the system is prone to losing its normal function when encountering test-time perturbations. In short, the negative impacts of stability attacks are serious, even with higher natural accuracy. Thus, it is imperative to design better defense methods to mitigate the threat of stability attacks.

\end{document}